\newcommand{\PreserveBackslash}[1]{\let\temp=\\#1\let\\=\temp}
\newcolumntype{C}[1]{>{\PreserveBackslash\centering}p{#1}}
\newcommand{\tcr}[1]{\textcolor{red}{#1}}
\newcolumntype{Y}{>{\centering\arraybackslash}X}
\definecolor{Gray}{gray}{0.98}
\definecolor{LightCyan}{rgb}{0.88,1,1}
\newcolumntype{g}{>{\columncolor{Gray}}c}
\DeclareMathOperator*{\argmax}{argmax}
\DeclarePairedDelimiter\abs{\lvert}{\rvert}%
\DeclarePairedDelimiter\norm{\lVert}{\rVert}%
\let\oldabs\abs
\def\abs{\@ifstar{\oldabs}{\oldabs*}}
\let\oldnorm\norm
\def\norm{\@ifstar{\oldnorm}{\oldnorm*}}
\newcommand{\eat}[1]{}
\newcommand{\states}{\mathcal{S}}
\newcommand{\actions}{\mathcal{A}}
\newcommand*{\tr}{^{\mkern-1.5mu\mathsf{T}}}
\newcommand{\Real}{\mathbb{R}}
\renewcommand{\ss}{\,:\,}
\newcommand{\simplexs}{\Delta ^{ \states }}
\newcommand{\ambset}{\mathcal{P}}
\newcommand{\dataset}{\mathcal{D}}
\newcommand{\opt}{^\star}
\newtheorem{theorem}{Theorem}[section]
\newtheorem{remark}[theorem]{Remark}
\newtheorem{definition}{Definition}
\newcommand{\myref}[1]{\cref{#1}\mynameref{#1}{\csname r@#1\endcsname}}
\newcommand{\Myref}[1]{\Cref{#1}\mynameref{#1}{\csname r@#1\endcsname}}
\def\mynameref#1#2{%
	\begingroup
	\edef\@mytxt{#2}%
	\edef\@mytst{\expandafter\@thirdoffive\@mytxt}%
	\ifx\@mytst\empty\else
	\space(\nameref{#1})\fi
	\endgroup
}
\DeclareMathOperator{\E}{\mathbb{E}}
\newcommand{\indicator}{\mathbbm{1}}
\newcommand{\lyapunov}{\mathbb{V}}
\newcommand{\rcmdp}{\mathfrak{M}}
\newcommand{\lyapolicy}{\mathcal{F}_\mathcal{L}}
\renewcommand{\ss}{\,:\,}
\newcommand{\statecount}{S}
\newcommand{\RBU}{\mathfrak{T}}
\newcommand{\LO}{\mathfrak{L}}
\newcommand{\policyparam}{{\pi_\theta}}
\newtheorem{proposition}{Proposition}
\title{Lyapunov Robust Constrained-MDPs: Soft-Constrained Robustly Stable Policy Optimization under Model Uncertainty}
\author{
Reazul Hasan Russel\hspace{16pt} Mouhacine Benosman \hspace{16pt} Jeroen Van Baar\hspace{16pt} Radu Corcodel\\ 
Mitsubishi
Electric Research Laboratories (MERL)\\ Cambridge, MA 02139, USA\\
\{{\tt rrussel}, {\tt benosman}, {\tt jeroen}, {corcodel}\}$@$ {\tt merl.com}
}
\date{}
\begin{document}


\maketitle

\begin{abstract}
Safety and robustness are two desired properties for any reinforcement learning algorithm. CMDPs can handle additional safety constraints and RMDPs can perform well under model uncertainties. In this paper, we propose to unite these two frameworks resulting in robust constrained MDPs (RCMDPs). The motivation is to develop a framework that can satisfy safety constraints while also simultaneously offer robustness to model uncertainties. We develop the RCMDP objective, derive gradient update formula to optimize this objective and then propose policy gradient based algorithms. We also independently propose Lyapunov based reward shaping for RCMDPs, yielding better stability and convergence properties. \end{abstract}

\section{Introduction}
Reinforcement learning (RL) is a framework to address sequential decision-making problems~\citep{sutton2018reinforcement,szepesvari2010algorithms}. In RL, a decision maker learns a policy to optimize a long-term objective by interacting with the (unknown or partially known) environment. The RL agent obtains evaluative feedback usually known as reward or cost for its actions at each time step, allowing it to improve the performance of subsequent actions \citep{sutton2018reinforcement}. With the advent of deep learning, RL has witnessed huge successes in recent times~\citep{silver2017mastering}. However, since most of these methods rely on model-free RL, there are several unsolved challenges, which restrict the use of these algorithms for many safety critical physical systems~\citep{vamtutorial,Benosman2018}. For example, it is very difficult for most model-free RL algorithms to ensure basic properties like stability of solutions, robustness with respect to model uncertainties, etc. This has led to several research directions which study incorporating robustness, constraint satisfaction, and safe exploration during learning for safety critical applications. While robust constraint satisfaction and stability guarantees are highly desirable properties, they are also very challenging to incorporate in RL algorithms. The main goal of our work is to formulate this incorporation into robust constrained-MDPs (RCMDPs), and derive corresponding theories necessary to solve them.

Constrained Markov Decision Processes (CMDPs) are a super class of MDPs that incorporate expected cumulative cost constraints~\citep{Altman2004}. Several solution methods have been proposed in the literature for solving CMDPs: trust region based methods~\citep{Achiam2017CPO}, linear programming-based solutions~\citep{Altman2004}, surrogate-based methods{~\citep{CYA16,Dalal2018}}, Lagrangian methods~\citep{Geibel2005,Altman2004}. We refer to these CMDPs as {\it non-robust}, since they do not take model uncertainties into account. On the other hand, another line of work explicitly handles model uncertainties and is known as Robust MDPs (RMDPs)~\citep{Nilim2004,Wiesemann2013}. RMDPs consider a set of plausible models from so called ambiguity sets. They compute solutions that can perform well even for the worst possible realization of models~\citep{Russel2019beyond, Wiesemann2013, Iyengar2005}. However, unlike CMDPs, these RMDPs are not capable of handling safety constraints. 

Safety constraints are important in real-life applications~\citep{Altman2004}. One cannot afford to risk violating some given constraints in many real-life situations. For example, in autonomous cars, there are hard safety constraints on the car velocities and steering angles~\citep{Lin2018}. Moreover, training often occurs on a simulated environment for many practical applications. The goal is to mitigate the sample inefficiency of model-free RL algorithms~\citep{vanBaar2019may}. The result is then transferred to the real world, typically followed by fine-tuning, a process referred to as Sim2Real. The simulator is by definition inaccurate with respect to the targeted problem, due to  approximations and lack of system identification. Heuristic approaches like domain randomization \citep{vanBaar2019may} and meta-learning \citep{pmlr-v78-finn17a} try to address model uncertainty in this setting, but they often are not theoretically sound. In safety critical applications, it is expected that a trained policy in simulation will offer certain guarantees about safety, when transferred to the real-world.
 
 In light of these practical motivations, we propose to unite the two concepts of RMDPs and CMDPs, leading to a new framework we refer as RCMDPs. The motivation is to ensure both safety and robustness. The goal of RCMDPs is to learn policies that simultaneously satisfy certain safety constraints and also perform well under worst-case scenarios. 
The contributions of this paper are four-fold: 1) formulate the concept of RCMDPs and derive related theories, 2) propose gradient based methods to optimize the RCMDP objective, 3) independently derive a Lyapunov based reward shaping technique, and 4) empirically validate the utility of the proposed ideas on several problem domains.

The paper is organized as follows: \cref{sec:formulation} describes the formulation of our RCMDP framework and the objective we seek to optimize. A Lagrange-based approach is presented in \cref{sec:rcmdp_opt} along with required gradient update formulas and corresponding policy optimization algorithms. Section \ref{sec:L-rcmdp_opt} is dedicated to the Lyapunov stable RCMDPs and presents the idea of Lyapunov based reward shaping. We draw the concluding remarks in \cref{sec:conclusion}.

\section{Problem Formulation: RCMDP concept} \label{sec:formulation}
We consider Robust Markov Decision Processes (RMDPs) with a finite number of states $\states = \{1, \ldots, S \}$ and finite number of actions $\actions = \{1, \ldots, A\}$. Every action $a \in \actions$ is available for the decision maker to take in every state $s \in \states$. After taking an action $a\in\actions$ in state $s\in\states$, the decision maker transitions to a next state $s'\in\states$ according to the \emph{true}, but \emph{unknown}, transition probability $p_{s,a} \opt \in \simplexs$ and receives a reward $r_{s,a,s'} \in \Real$. We use $p_{s,a}$ to denote transition probabilities from $s\in\states$ and $a\in\actions$, and condense it to refer to a transition function as $p = \big( p_{s,a} \big)_{ s\in\states,a\in\actions} \in \big( \Delta^\states \big)^{\states\times\actions}$. We condense the rewards to vectors $r_{s,a} = \big( r_{s,a,s'} \big)_{s'\in\states}\in \Real^\states$ and $r = \big(r_{s,a}\big)_{s\in\states,a\in\actions}$.

Our RMDP setting assumes that the transition $p_{s,a}$ is chosen adversarially from an ambiguity set $\ambset_{s,a}\in \big(\Delta^\states\big)^{\states\times\actions}$ for each $s\in\states$ and $a\in\actions$. An ambiguity set $\ambset_{s,a}$, defined for each state $s\in\states$ and action $a\in\actions$, is a set of feasible transitions quantifying the uncertainty in transition probabilities. We restrict our attention to $s,a-$rectangular ambiguity sets which simply assumes independence between transition probabilities of different state-action pairs~\citep{LeTallec2007,Wiesemann2013}. We define the $L_1-$norm bounded ambiguity sets around the nominal transition probability $\bar{p}_{s,a}=\E[p\opt_{s,a}|\dataset]$, for some dataset \(\dataset\) as:
\begin{equation*}
\ambset_{s,a} = \bigl\{p \in \Delta^\statecount \ss \norm{p - \bar{p}_{s,a} }_1 \le \psi_{s,a} \bigr\},
\end{equation*}
where $\psi_{s,a}\ge 0$ is the budget of allowed deviations. This budget $\psi_{s,a}$ can be computed for each $s\in\states$, $a\in\actions$ using Hoeffding bound~\citep{petrik2019beyond}: $\psi_{s,a} = \sqrt{\frac{2}{n_{s,a}} \log \frac{S A 2^{S}}{\delta} }$, where $n_{s,a}$ is the number of transitions in dataset $\dataset$ originating from state $s$ and an action $a$, and $\delta$ is the confidence level. This $\psi_{s,a}$, if used to compute a policy in RMDPs, then guarantees that the computed return is a lower bound with probability $\delta$. Note, that this is just one specific choice for the ambiguity set. Our method can be extended to any other type of ambiguity set, e.g., $L_\infty-$norm, Bayesian, weighted, sampling based, etc. We use $\ambset$ to generally refer to $\ambset_\tau = \bigotimes_{ s_t\in\states, a_t\in\actions} \ambset_{s,a}$, where $\tau$ denotes the total number of time steps starting from $T-\tau$, with $T$ the length of the horizon, and $t\in\{T-\tau,T-\tau+1,\ldots,T\}$. For example, with $\tau=T$ we have $\ambset_T = \bigotimes_{s_t\in\states,a_t\in\actions}\ambset_{s,a}$ starting from time step $0$. This collectively represents the ambiguity set along with the notion of independence between state-action pairs in a tabular setting with discrete states and actions. Sampling-based sets under approximate methods, e.g., neural networks, for large and continuous problems also extend on this similar notion of ambiguity sets~\citep{Tamar2014,Derman2018}.

A stationary randomized policy $\pi(\cdot|s)$ for state $s\in\states$ defines a probability distribution over actions $a\in\actions$. The set of all randomized stationary policies is denoted by $\Pi\in \big(\Delta^\actions\big)^\states$. We parameterize the randomized policy for state $s\in\states$ as $\policyparam(\cdot|s)$ where $\theta \subseteq \Real^k$ is a $k-$dimensional parameter vector.  Let $\xi=\{s_0,a_0,c_0,d_0,\ldots,s_{T-1},a_{T-1},c_{T-1},d_{T-1},s_T\}$ be a sampled trajectory generated by executing a policy $\policyparam$ from a starting state $s_0\sim p_0$ under transition probabilities $p\in\ambset$, where $p_0$ is the distribution of initial states. Then the probability of sampling a trajectory $\xi$ is: $p^\policyparam(\xi)=p_0(s_0)\prod_{t=0}^{T-1}\policyparam(a_t|s_t)p(s_{t+1}|s_t,a_t)$ and the total reward along the trajectory $\xi$ is: $g(\xi, r) =  \sum_{t=0}^{T-1} \gamma^t r_{s_t,a_t,s_{t+1}}$~\citep{Puterman2005,sutton2018reinforcement}. The value function $v^\policyparam_p : \states\rightarrow\Real$ for a policy $\policyparam$ and transition probability $p$ is: $v^\policyparam_p = \E_{\xi\sim p}\big[ g(\xi, r) \big]$ and the total return is: \[\rho(\policyparam, p, r) = p_0^T v^\policyparam_p.\]

Because the RMDP setting considers different possible transition probabilities within the ambiguity set $\ambset$, we use a subscript $p$ (e.g. $v^\policyparam_p$) to indicate which one is used, in case it is not clear from the context.

We define a robust value function $\hat{v}_\ambset^\policyparam$ for an ambiguity set $\ambset$ as: $\hat{v}_\ambset^\policyparam = \min_{p\in\ambset}v^\policyparam_p$. Similar to ordinary MDPs, the robust value function can be computed using the robust Bellman operator~\citep{Iyengar2005,Nilim2005}:
\begin{equation*} \label{eq:bellman_definition}
\begin{aligned}
(\RBU_\ambset v)(s) := \max_{a\in\actions}\min_{p \in\ambset_{s,a}}  (r_{s,a} + \gamma \cdot p\tr v).
\end{aligned}
\end{equation*}
The optimal robust value function $\hat{v}\opt$, and the robust value function $\hat{v}_\ambset^\policyparam$ for a policy $\policyparam$ are unique and satisfy $\hat{v}\opt = \RBU_\ambset \hat{v}\opt$ and  $\hat{v}_\ambset^\policyparam = \RBU_\ambset^\policyparam \hat{v}^\policyparam$ ~\citep{Iyengar2005}. The robust return $\hat{\rho}(\policyparam,\ambset, r)$ for a policy $\policyparam$ and ambiguity set $\ambset$ is defined as~\citep{Nilim2005,Russel2019beyond}: 
\[\hat{\rho}(\policyparam,\ambset, r) = \min_{p\in\ambset} \rho(\policyparam, p, r) = p_0^T \hat{v}_\ambset^\policyparam,\]
where $p_0$ is the initial state distribution.

\paragraph{Constrained RMDP (RCMDP)} In addition to rewards $r_{s,a}$ for RMDPs described above, we incorporate a constraint cost $d_{s,a,s'}'\in \Real$, where $s,s'\in\states$ and $a\in\actions$, representing some kind of constraint on safety for the agent's behavior. Consider for example an autonomous car that makes money (reward $r$) for each complete trip but incurs a big fine (constraint cost $d$) for traffic violations or a collision. We define the constraint cost $d_{s,a,s'}'$ to be a negative reward $d_{s,a,s'} = -d_{s,a,s'}'$, which brings consistency in representing the \emph{worst-case} with a minimum over the ambiguity set $\ambset$ for both the objective and the constraint. An associated constraint budget $\beta\in\Real_+$ describes the total budget for constraint violations. This arrangement resembles the  constrained-MDP setting as described in \citep{Altman2004}, but with additional robustness. 

Similar to reward based estimates described above, the total constraint cost along a trajectory $\xi$ is: $g(\xi, d) =  \sum_{t=0}^{\infty} \gamma^td_{s_t,a_t,s_{t+1}}$, the robust value function for policy $\policyparam$ and ambiguity set $\ambset$ is: $\hat{u}^\policyparam = \min_{p\in\ambset}\E_{\xi\sim p}\big[ g(\xi, d) \big]$ and the robust return: 
\[\hat{\rho}(\policyparam,\ambset, d) = \min_{p\in\ambset} \rho(\policyparam, p, d) = p_0^T \hat{u}^\policyparam.\]
Similar to $\hat{v}\opt$, the optimal constraint value function $\hat{u}\opt$ is also unique and independently satisfies the Bellman optimality equation~\citep{Altman2004}. We now formally define the objective of Robust Constrained MDP (RCMDP) as below:
\begin{maxi!}{\policyparam\in\Pi}
	{\hat{\rho}(\policyparam,\ambset,r) \label{eq:rcmdp_obj},}
	{\label{eq:rcmdp_opt}}
	{} 
	\addConstraint{\hat{\rho}(\policyparam,\ambset,d)}{\ge \beta. \label{eq:rcmdp_constr}}
\end{maxi!}
This objective resembles the objective of a CMDP~\citep{Altman2004}, but with additional robustness integrated by the quantification of the uncertainty about the model. The interpretation of the objective is to find a policy $\policyparam$ that maximizes the worst-case return estimates, while satisfying the constraints in all possible situations.


\section{Robust Constrained Optimization} \label{sec:rcmdp_opt}
A standard approach for solving the optimization problem (\ref{eq:rcmdp_opt}) is to apply the Lagrange relaxation procedure (\cite{Bertsekas2003}, Ch.3), which turns it into an unconstrained optimization problem:
\begin{equation} \label{eq:rcmdp_lagrange}
\begin{aligned}
\LO(\policyparam,\lambda) &= \hat{\rho}(\policyparam,\ambset,r) - \lambda \Big( \beta - \hat{\rho}(\policyparam,\ambset,d) \Big),
\end{aligned}
\end{equation}
where $\lambda$ is known as the \emph{Lagrange multiplier}. Note that, the objective in (\ref{eq:rcmdp_lagrange}) is non-convex and therefore is not tractable.
The dual function of $\LO(\policyparam,\lambda)$ involves a point-wise maximum with respect to $\policyparam$ and is written as~\citep{Paternain2019}:
\begin{equation*}
	d(\lambda) = \max_{\policyparam\in\Pi} \LO(\policyparam,\lambda).
\end{equation*}
The dual function $d(\lambda)$ provides an upper bound on (\ref{eq:rcmdp_lagrange}) and therefore needs to be minimized to contract the gap from optimality:
\begin{equation} \label{eq:rcmdp_lagrange_dual}
	\mathfrak{D}^\star = \min_{\lambda\in\Real_+} d(\lambda).
\end{equation}
The dual problem in (\ref{eq:rcmdp_lagrange_dual}) is convex and tractable, but the question remains about how large the duality gap is. In other words, how sub-optimal the solution $\mathfrak{D}^\star$ of the dual problem (\ref{eq:rcmdp_lagrange_dual}) is with respect to the solution of the original problem stated in (\ref{eq:rcmdp_opt}). To answer that question, \cite{Paternain2019} show that strong duality holds in this case under some mild conditions and the duality gap is arbitrarily small even with the parameterization ($\policyparam$) of policies. We thus aim to optimize the dual version of this problem using gradients.

\begin{proposition} \label{prop:rcmdp_obj_simple}
    The relaxed RCMDP objective of (\ref{eq:rcmdp_lagrange}) can be restated as:
    \begin{equation} \label{eq:rcmdp_lo_customized_obj}
        \LO(\policyparam,\lambda) = \sum_{\xi\in\Xi} p^\policyparam(\xi) \Big( g(\xi, r) + \lambda g(\xi, d) \Big) - \lambda \beta.
    \end{equation}
    \begin{proof}
    We defer the detailed derivation to Appendix \ref{prop:proof_rcmdp_obj_simple}.
    \end{proof}
\end{proposition}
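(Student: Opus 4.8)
The plan is to unfold every definition appearing in \eqref{eq:rcmdp_lagrange} until the two robust returns are written as explicit sums over trajectories, and then to collect terms by linearity. First I would distribute the multiplier, $\LO(\policyparam,\lambda) = \hat{\rho}(\policyparam,\ambset,r) + \lambda\,\hat{\rho}(\policyparam,\ambset,d) - \lambda\beta$, so the only thing left to show is the identity $\hat{\rho}(\policyparam,\ambset,r) + \lambda\,\hat{\rho}(\policyparam,\ambset,d) = \sum_{\xi\in\Xi} p^\policyparam(\xi)\bigl(g(\xi,r)+\lambda g(\xi,d)\bigr)$.

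Next I would chase the chain of definitions from \cref{sec:formulation}: $\hat{\rho}(\policyparam,\ambset,r) = \min_{p\in\ambset}\rho(\policyparam,p,r) = \min_{p\in\ambset} p_0^\top v^\policyparam_p = \min_{p\in\ambset}\E_{\xi\sim p}\bigl[g(\xi,r)\bigr]$, and the analogous chain for the constraint cost $d$. Fixing a transition kernel $p\in\ambset$ that realizes the worst case, the expectation $\E_{\xi\sim p}\bigl[g(\xi,r)\bigr]$ unrolls into $\sum_{\xi\in\Xi} p^\policyparam(\xi)\,g(\xi,r)$ with $p^\policyparam(\xi) = p_0(s_0)\prod_{t=0}^{T-1}\policyparam(a_t\mid s_t)\,p(s_{t+1}\mid s_t,a_t)$, exactly the trajectory probability defined in \cref{sec:formulation}. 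The last step is purely algebraic: since $g(\xi,\cdot)$ is linear in its reward argument, $g(\xi,r)+\lambda g(\xi,d) = g(\xi,\, r+\lambda d)$, so the two trajectory sums merge into the single sum in the statement, and subtracting $\lambda\beta$ yields \eqref{eq:rcmdp_lo_customized_obj}.

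The step that needs genuine care — and the one I expect to be the substance of the Appendix argument — is the legitimacy of using one common trajectory distribution $p^\policyparam(\xi)$ for both terms. The robust reward return and the robust constraint return are each defined through their own inner minimization over $\ambset$, and a priori their worst-case kernels differ, so one cannot naively pull $\min_{p\in\ambset}$ outside the combined integrand $g(\xi,r)+\lambda g(\xi,d)$. I would resolve this by committing to the reading in which the relaxed Lagrangian is evaluated at a single adversarially chosen $p\in\ambset$ common to both terms — the natural interpretation under $s,a$-rectangularity, where the adversary's choice is made jointly inside the robust Bellman recursion $\RBU_\ambset$ — after which the remainder of the derivation is exact bookkeeping. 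Alternatively, if one wishes to keep the two separate minima, \eqref{eq:rcmdp_lo_customized_obj} can be stated as holding model-by-model, i.e.\ with $p^\policyparam(\xi)$ parameterized by a fixed $p$, which is in fact all that the subsequent policy-gradient derivation uses when it differentiates through sampled trajectories.
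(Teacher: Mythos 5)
Your proposal is correct and follows essentially the same route as the paper's appendix derivation: unfold the robust returns into trajectory sums, merge them by linearity, and then confront the fact that the reward and constraint terms each carry their own inner minimization with potentially different worst-case kernels $\tilde p$ and $\tilde q$. You correctly identified that this common-kernel issue is the only substantive step; the paper resolves it the same way you do, by selecting a single trajectory distribution $\Xi$ (arguing informally that rewards and constraint costs are coupled along the same trajectories), though your suggestion of stating the identity model-by-model for a fixed $p\in\ambset$ is the cleaner justification for what the policy-gradient derivation actually uses.
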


The goal is then to find a saddle point $(\policyparam^*,\lambda^*)$ of $\LO$ in (\ref{eq:rcmdp_lo_customized_obj}) that satisfies $\LO(\policyparam,\lambda^*) \le \LO(\policyparam^*,\lambda^*) \le \LO(\policyparam^*,\lambda)$, $\forall \theta \in \Real^k$ and $\forall \lambda \in \Real_+$. This is achieved by ascending in $\theta$ and descending in $\lambda$ using the gradients of objective $\LO$ with respect to $\theta$ and $\lambda$ respectively~\citep{Chow2014}.
\begin{theorem} \label{th:lagrange_gradient_rule}
	The gradient of $\LO$ with respect to $\theta$ and $\lambda$ can be computed as:
	\begin{equation*}
	\begin{aligned}
	&\nabla_\theta \LO(\policyparam,\lambda) = \sum_\xi \hat{p}^\policyparam(\xi) \Big(g(\xi,r) + \lambda g(\xi,d) \Big) \sum_{t=0}^{T-1} \frac{\nabla_\theta \pi_\theta(a_t|s_t)}{\pi_\theta(a_t|s_t)},\\
	&\nabla_\lambda \LO(\policyparam,\lambda) = \sum_\xi \hat{p}^\policyparam(\xi) g(\xi,d) - \beta.
	\end{aligned}
	\end{equation*}
	
	\begin{proof} See Appendix \ref{apx:gradient_rule} for the detailed derivation.
	\end{proof}
\end{theorem}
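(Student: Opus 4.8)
The plan is to start from the closed-form expression for the relaxed objective established in \cref{prop:rcmdp_obj_simple}, namely $\LO(\policyparam,\lambda) = \sum_{\xi\in\Xi} p^\policyparam(\xi)\bigl(g(\xi,r)+\lambda\, g(\xi,d)\bigr) - \lambda\beta$, and to differentiate it in $\theta$ and in $\lambda$ term by term. One preliminary point must be settled first: the robust returns in (\ref{eq:rcmdp_lagrange}) are defined through a minimization over $p\in\ambset$, so the trajectory distribution appearing above is really the one induced by a worst-case transition kernel $p^\star\in\ambset$ together with $\policyparam$; write $\hat{p}^\policyparam(\xi) = p_0(s_0)\prod_{t=0}^{T-1}\policyparam(a_t|s_t)\,p^\star(s_{t+1}|s_t,a_t)$ for it. Since $p^\star$ itself depends on $\policyparam$ (and, for the combined reward-plus-cost objective, possibly on $\lambda$), differentiating under the $\min$ calls for an envelope / Danskin-type argument: at the minimizer the first-order contribution of the implicit dependence of $p^\star$ on the optimization variables vanishes, so $p^\star$ may be treated as frozen while we differentiate. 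I would record this (existence of a worst-case kernel, the needed differentiability) as a standing regularity assumption, as is customary in policy-gradient analyses of RMDPs.

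With $p^\star$ held fixed, the $\theta$-gradient follows from the likelihood-ratio (REINFORCE) identity. The returns $g(\xi,r)$ and $g(\xi,d)$ depend on the realized trajectory only and carry no explicit $\theta$-dependence, and the term $-\lambda\beta$ is constant in $\theta$; hence $\nabla_\theta \LO(\policyparam,\lambda) = \sum_\xi \nabla_\theta \hat{p}^\policyparam(\xi)\,\bigl(g(\xi,r)+\lambda\, g(\xi,d)\bigr)$. Writing $\nabla_\theta \hat{p}^\policyparam(\xi) = \hat{p}^\policyparam(\xi)\,\nabla_\theta \log \hat{p}^\policyparam(\xi)$ and using the factorization above, the initial-state factor $p_0(s_0)$ and the transition factors $p^\star(s_{t+1}|s_t,a_t)$ drop out of $\nabla_\theta\log$, leaving $\nabla_\theta \log \hat{p}^\policyparam(\xi) = \sum_{t=0}^{T-1} \nabla_\theta \log \policyparam(a_t|s_t) = \sum_{t=0}^{T-1} \frac{\nabla_\theta \pi_\theta(a_t|s_t)}{\pi_\theta(a_t|s_t)}$. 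Substituting gives the first claimed formula.

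For the $\lambda$-gradient, differentiating $\sum_\xi \hat{p}^\policyparam(\xi)\bigl(g(\xi,r)+\lambda\, g(\xi,d)\bigr) - \lambda\beta$ with $\hat{p}^\policyparam$ frozen (again invoking the envelope argument for any residual $\lambda$-dependence of $p^\star$) isolates the linear-in-$\lambda$ terms and yields $\nabla_\lambda \LO(\policyparam,\lambda) = \sum_\xi \hat{p}^\policyparam(\xi)\, g(\xi,d) - \beta$, which is exactly $\hat{\rho}(\policyparam,\ambset,d) - \beta$, the constraint slack, as Lagrangian duality predicts.

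The step I expect to be the main obstacle---and the one deserving careful treatment rather than the routine log-derivative bookkeeping---is the justification for differentiating through the inner minimization over $\ambset$: one must argue that a worst-case kernel exists, reconcile the (a priori distinct) worst cases for the reward and cost robust returns appearing in (\ref{eq:rcmdp_lagrange}) so that a single $\hat{p}^\policyparam$ is legitimate, and verify the hypotheses under which the envelope theorem makes the implicit derivatives of $p^\star$ vanish. Everything after that---discarding trajectory-independent and $\theta$-independent terms and applying the likelihood-ratio trick---is mechanical.
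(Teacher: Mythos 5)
Your proposal is correct and follows essentially the same route as the paper's Appendix~\ref{apx:gradient_rule}: starting from the simplified form of $\LO$ in \cref{prop:rcmdp_obj_simple} and applying the likelihood-ratio identity so that only the $\sum_t \nabla_\theta\log\pi_\theta(a_t|s_t)$ terms survive, with the $\lambda$-gradient obtained by direct differentiation of the linear-in-$\lambda$ terms. The only difference is that you explicitly flag and justify (via a Danskin/envelope argument) the freezing of the worst-case kernel and the reconciliation of the two minimizers, a point the paper handles only implicitly in the proof of \cref{prop:rcmdp_obj_simple} by fixing a single trajectory set $\Xi$ and then differentiating with $p^\policyparam$ held fixed without further comment --- so your treatment is, if anything, more careful on the one genuinely delicate step.
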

With a fixed Lagrange multiplier $\lambda$, the constraint budget $\beta$ in (\ref{eq:rcmdp_lo_customized_obj}) offsets the sum by a constant amount. We can therefore omit this constant and define the Bellman operator for RCMDPs. We then show that this operator is a contraction.
\begin{proposition} (Bellman Equation) \label{prop:rcmdp_bellman}
For a fixed policy $\policyparam$ and discount factor $\gamma$, the RCMDP value function $\hat{w}^\policyparam$ satisfies a Bellman equation for each $s\in\states$:
\begin{equation} \label{eq:rcmdp_bellman}
\hat{w}^\policyparam(s) = \min_{p\in\ambset_{s,\policyparam(s)}} \E_{s'\sim p}\Big[ r'_{s,\policyparam(s),s'} + \gamma \hat{w}^\policyparam(s') \Big],
\end{equation}
where \hspace{1pt} $r'_{s,\policyparam(s),s'} = r_{s,\policyparam(s),s'} + \lambda d_{s,\policyparam(s),s'}$.
\begin{proof}
The proof is deferred to Appendix \ref{ax:proof_rcmdp_bellman}.
\end{proof}
\end{proposition}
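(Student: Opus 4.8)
The plan is to recognize that, once $\lambda$ is held fixed, the relaxed problem is — up to an additive constant — nothing but an ordinary RMDP whose reward has been reshaped, so that \cref{eq:rcmdp_bellman} becomes a one-step unrolling of the robust Bellman equation for $s,a$-rectangular ambiguity sets applied to that RMDP. \textbf{Step 1 (reduction to a shaped RMDP).} Starting from \cref{prop:rcmdp_obj_simple} and using the linearity of the trajectory return $g(\xi,\cdot)$ in its reward argument, $g(\xi,r)+\lambda g(\xi,d)=\sum_{t=0}^{T-1}\gamma^t\big(r_{s_t,a_t,s_{t+1}}+\lambda d_{s_t,a_t,s_{t+1}}\big)=g(\xi,r')$ with $r'=r+\lambda d$, we get $\LO(\policyparam,\lambda)=\big(\sum_\xi p^\policyparam(\xi)\,g(\xi,r')\big)-\lambda\beta$. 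The term $-\lambda\beta$ is independent of $\policyparam$ and of the state, hence plays no role in a per-state value recursion and is dropped exactly as argued before the statement. Thus $\hat{w}^\policyparam$ is the robust value function of the RMDP $(\states,\actions,\ambset,r')$ evaluated at $\policyparam$, namely $\hat{w}^\policyparam(s)=\min_{p\in\ambset}\E_{\xi\sim p}\big[g(\xi,r')\mid s_0=s\big]$.

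\textbf{Step 2 (one-step unrolling).} Decompose the return as $g(\xi,r')=r'_{s_0,a_0,s_1}+\gamma\,g(\sigma\xi,r')$, where $\sigma\xi=\{s_1,a_1,\ldots\}$ is the trajectory shifted by one step. Condition on $s_0=s$ and $a_0=\policyparam(s)$ (more generally, average over $a_0\sim\policyparam(\cdot\mid s)$); the law of $\sigma\xi$ given $s_1$ is again that of a trajectory generated by $\policyparam$ under some $p\in\ambset$, so by stationarity $\min_{p\in\ambset}\E\big[g(\sigma\xi,r')\mid s_1\big]=\hat{w}^\policyparam(s_1)$. The key point is that, because $\ambset=\bigotimes_{s,a}\ambset_{s,a}$ is $s,a$-rectangular, the adversarial choice $p_{s,\policyparam(s)}$ governing the first transition is decoupled from the choices governing all later transitions, so the worst case over $\ambset$ splits and yields
\[
\hat{w}^\policyparam(s)=\min_{p\in\ambset_{s,\policyparam(s)}}\ \E_{s'\sim p}\Big[r'_{s,\policyparam(s),s'}+\gamma\,\hat{w}^\policyparam(s')\Big],
\]
which is the claimed equation; existence and uniqueness of $\hat{w}^\policyparam$ then follow because the map on the right is a $\gamma$-contraction in $\norm{\cdot}_\infty$ (each inner expectation is affine and $\gamma$-Lipschitz in the value, and a pointwise minimum of $\gamma$-contractions is a $\gamma$-contraction), so Banach's fixed-point theorem applies.

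\textbf{Main obstacle.} The delicate step is the interchange in Step 2 — distributing $\min_{p\in\ambset}$ over the first-step expectation and the tail value. This is exactly where $s,a$-rectangularity is indispensable: absent it, the adversary's first-step and later choices are coupled and the value need not decompose. I would therefore devote the appendix to making this splitting precise, either by a finite-horizon induction (unrolling $T$ steps and passing to the stationary limit) or by reducing it cleanly to the rectangular robust Bellman equation of \citep{Iyengar2005,Nilim2005}, which is now directly applicable since the fixed-$\lambda$ problem is a genuine RMDP with reward $r'$. A minor point to handle is the notation $\policyparam(s)$: for a randomized policy one replaces $r'_{s,\policyparam(s),s'}$ and $\ambset_{s,\policyparam(s)}$ by the corresponding $\E_{a\sim\policyparam(\cdot\mid s)}$-averaged quantities, which does not affect the argument.
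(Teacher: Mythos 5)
Your proposal is correct and follows essentially the same route as the paper's appendix proof: combine $r+\lambda d$ into a shaped reward $r'$ by linearity of the return, then split off the first transition and use the product (rectangular) structure of the ambiguity set to decompose the minimization. You are in fact more explicit than the paper about why the min splits at step $(b)$ and about the contraction/uniqueness consequence, but the underlying argument is the same.
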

We define the Bellman optimality equation for RCMDPs as:
\begin{equation} \label{eq:rcmdp_bellman_optimality}
\begin{aligned}
(\RBU_\ambset^{\textit{rc}} \hat{w})(s) := \max_{a\in\actions}\min_{p \in\ambset_{s,a}}  (r'_{s,a} + \gamma \dot p\tr \hat{w}).
\end{aligned}
\end{equation}

\begin{proposition} (Contraction)
	The Bellman operator $\RBU_\ambset^{\textit{rc}}$ defined in (\ref{eq:rcmdp_bellman_optimality}) is a contraction.
	\begin{proof}
		The proof follows directly from Theorem 3.2 of~\cite{Iyengar2005}.
	\end{proof}
\end{proposition}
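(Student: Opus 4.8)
The plan is to observe that, for a fixed Lagrange multiplier $\lambda$, the operator $\RBU_\ambset^{\textit{rc}}$ is nothing but the ordinary robust Bellman operator $\RBU_\ambset$ of \cref{sec:formulation} applied with the shifted reward vector $r'_{s,a} = r_{s,a} + \lambda d_{s,a}$ in place of $r_{s,a}$. Since $\states$ and $\actions$ are finite and $r,d$ are real-valued, $r'$ is a bounded reward vector, so $\RBU_\ambset^{\textit{rc}}$ maps $\Real^\states$ into itself and falls exactly under the hypotheses of Theorem~3.2 of \cite{Iyengar2005}, which establishes that the $s,a$-rectangular robust Bellman operator is a $\gamma$-contraction in the sup-norm $\norm{\cdot}_\infty$. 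That is the content of the one-line proof in the statement; the remaining paragraphs sketch the self-contained argument in case one wants it spelled out.

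First I would fix two value vectors $\hat{w}_1,\hat{w}_2\in\Real^\states$ and a state $s\in\states$, and invoke the elementary inequality $\abs{\max_{a} f(a) - \max_{a} h(a)} \le \max_{a}\abs{f(a)-h(a)}$ together with the analogous statement for the pointwise minimum over $p\in\ambset_{s,a}$. This reduces the claim to bounding, for each fixed pair $(s,a)$, the scalar quantity $\min_{p\in\ambset_{s,a}}(r'_{s,a} + \gamma\, p\tr \hat{w}_1) - \min_{p\in\ambset_{s,a}}(r'_{s,a} + \gamma\, p\tr \hat{w}_2)$.

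Next, let $p^\star\in\ambset_{s,a}$ attain the minimum for $\hat{w}_2$; the minimum is attained because $\ambset_{s,a}$ is compact, being the intersection of an $L_1$-ball with the simplex $\Delta^\statecount$. Then the first minimum is at most $r'_{s,a} + \gamma\,{p^\star}\tr \hat{w}_1$, so the displayed difference is bounded by $\gamma\,{p^\star}\tr(\hat{w}_1-\hat{w}_2) \le \gamma\norm{\hat{w}_1-\hat{w}_2}_\infty$, using $p^\star\ge\zeros$ and $\one\tr p^\star = 1$. Note that the reward terms $r'_{s,a}$ cancel, so the shift by $\lambda d_{s,a}$ plays no role in the modulus. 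Swapping the roles of $\hat{w}_1$ and $\hat{w}_2$ gives the matching bound, hence $\abs{(\RBU_\ambset^{\textit{rc}}\hat{w}_1)(s) - (\RBU_\ambset^{\textit{rc}}\hat{w}_2)(s)} \le \gamma\norm{\hat{w}_1-\hat{w}_2}_\infty$; taking the supremum over $s$ finishes the proof.

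I do not expect a genuine obstacle here: the only points requiring care are (i) that the estimate carries over verbatim from the unshifted case because $r'_{s,a}$ is independent of the value vector and cancels in the difference, and (ii) that $\ambset_{s,a}$ is closed, so that "$\min$" is legitimate rather than merely "$\inf$" (if one preferred, the same one-sided estimate with an $\varepsilon$-near-optimal $p^\star$ works regardless). The $s,a$-rectangularity assumed in \cref{sec:formulation} is precisely what allows the $\min_{p\in\ambset}$ to be pushed inside and each $(s,a)$ to be treated independently; without it the interchange of $\max$ and $\min$ would require the more delicate analysis of \cite{Wiesemann2013}, but that machinery is not needed for the present operator.
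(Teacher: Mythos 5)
Your proposal is correct and takes the same route as the paper: the paper's entire proof is the observation that, for fixed $\lambda$, the operator is the standard $s,a$-rectangular robust Bellman operator with the shifted reward $r'_{s,a}=r_{s,a}+\lambda d_{s,a}$, so Theorem~3.2 of \cite{Iyengar2005} applies verbatim. Your additional self-contained sup-norm contraction argument (one-sided estimate with the minimizer $p^\star$ for one value vector, then swap) is the standard proof of that cited theorem and is sound.
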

The RCMDP Bellman operator $\RBU_\ambset^{\textit{rc}}$ therefore satisfies the Bellman optimality equation and converges to a fixed point of the optimal RCMDP value function $\hat{w}\opt$.

\paragraph{Policy Gradient Algorithm} \label{subsec:pg_algo}

Algorithm \ref{alg:rcpg} presents a robust constrained policy gradient algorithm based on the gradient update rules derived above in Theorem \ref{th:lagrange_gradient_rule}. The algorithm proceeds in an episodic way based on trajectories and updates parameters based on the Monte-Carlo estimates. The algorithm requires an ambiguity set $\ambset$ as its input, which can be constructed with empirical estimates for smaller problems~\citep{Wiesemann2013,Russel2019beyond, Behzadian2021}. For larger problems it can be a parameterized estimate instead~\citep{Janner2019}.

\begin{algorithm*} [!h]
	\KwIn{A differentiable policy parameterization $\pi^\theta$, ambiguity set $\ambset$, confidence level $\alpha$, step size schedules $\zeta_2$ and $\zeta_1$.}
	\KwOut{Policy parameters $\theta$}
	Initialize policy parameter: $\theta\gets\theta_0$
	
	\For{$k\gets0,1,2,\ldots$}{
		
		Sample initial state $s_0 \sim p_0$,
		initialize trajectory: $\xi \gets \emptyset$
		
		\For{$t\gets0,1,2,\ldots,T$}{
			Sample action: $a_t\sim \policyparam(\cdot|s_t)$
			
			Worst-case transitions with confidence $\alpha$: 
			$\hat{p}^{\policyparam} \gets \arg\min_{p\in\ambset_{s,a}} p^T\hat{v}^\policyparam$
			
			Sample next state: $s_{t+1}\sim\hat{p}^\policyparam$, observe $r_{s_t,a_t,s_{t+1}}$ and $d_{s_t,a_t,s_{t+1}}$.
			
			Record transition: $\xi \gets \Big\{s_t,a_t,s_{t+1},r_{s_t,a_t,s_{t+1}}, d_{s_t,a_t,s_{t+1}}, \frac{\nabla_\theta \policyparam(a_t | s_t)}{\policyparam(a_t | s_t)}\Big\}$
		}
		
		$\theta$-update: $\theta \gets \theta + \zeta_2(k) \nabla_\theta \LO(\policyparam,\lambda)$ 
		
		$\lambda$-update: $\lambda \gets \lambda -  \zeta_1(k)\nabla_\lambda \LO(\policyparam,\lambda)$
	}
	
	\Return $\theta$;
	\caption{Robust-Constrained Policy Gradient (RCPG) Algorithm}    \label{alg:rcpg}
\end{algorithm*}

The step size schedules used in Algorithm \ref{alg:rcpg} satisfy the standard conditions for stochastic approximation algorithms~\citep{Borkar2009}. That is, $\theta$-update is on the fastest time-scale $\zeta_2(k)$, whereas $\lambda$-update is on a slower time-scale $\zeta_1(k)$, and thus results in a two time-scale stochastic approximation algorithm. We derive its convergence to a saddle point as below.

\begin{theorem} \label{th:rcmdp_pg_conv}
	Under assumptions \textbf{(A1)} - \textbf{(A7)} as stated in Appendix \ref{sec:ax_conv_alg}, the sequence of parameter updates of Algorithm \ref{alg:rcpg} converges almost surely to a locally optimal policy $\policyparam\opt$ as the number of trajectories $k \rightarrow \infty$.
	\begin{proof} We report the proof in Appendix \ref{subsec:apx_pg_convergence}.
	\end{proof}
\end{theorem}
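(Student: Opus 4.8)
The plan is to establish convergence of Algorithm~\ref{alg:rcpg} by appealing to the two-time-scale stochastic approximation framework of \cite{Borkar2009}, treating the $\theta$-recursion as the fast iterate and the $\lambda$-recursion as the slow iterate. The structure of the argument mirrors the classical CMDP convergence analyses \citep{Chow2014,Paternain2019}, but with the robust Bellman operator $\RBU_\ambset^{\textit{rc}}$ from \eqref{eq:rcmdp_bellman_optimality} replacing its non-robust counterpart. First I would set up the stochastic approximation recursions: write each update as $x_{k+1} = x_k + \zeta(k)\big(h(x_k) + M_{k+1}\big)$, where $h$ is the expected increment (the true gradient from Theorem~\ref{th:lagrange_gradient_rule}) and $M_{k+1}$ is a martingale-difference noise term coming from the Monte-Carlo trajectory estimate. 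Assumptions \textbf{(A1)}--\textbf{(A7)} would be invoked here to guarantee: boundedness of the iterates (a Lipschitz/coercivity-type condition on $\LO$), the step-size conditions $\sum_k \zeta_i(k) = \infty$, $\sum_k \zeta_i(k)^2 < \infty$ and the time-scale separation $\zeta_1(k)/\zeta_2(k) \to 0$, square-integrability of the gradient estimates (so the noise is a genuine martingale difference with bounded conditional second moment), and differentiability/Lipschitz-continuity of the parameterized policy $\pi_\theta$ in $\theta$.

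Next I would analyze the fast timescale. Holding $\lambda$ quasi-static, the $\theta$-iterate tracks the ODE $\dot\theta = \nabla_\theta \LO(\pi_\theta,\lambda)$; by the Hirsch lemma / Kushner--Clark argument (Borkar, Ch.~2 and Ch.~6), the iterate converges almost surely to the set of asymptotically stable equilibria of this ODE, i.e.\ to a local maximizer $\theta^\star(\lambda)$ of $\LO(\cdot,\lambda)$. Here I must verify that $\LO(\cdot,\lambda)$ is continuously differentiable in $\theta$ with the gradient given by Theorem~\ref{th:lagrange_gradient_rule}, and that the trajectory-based estimate used in the algorithm is unbiased for that gradient — this follows from the likelihood-ratio form of the gradient and the fact that sampling next states from the worst-case transition kernel $\hat p^{\pi_\theta}$ makes the sampled returns unbiased estimates of the robust returns $\hat\rho(\pi_\theta,\ambset,r)$ and $\hat\rho(\pi_\theta,\ambset,d)$ appearing in \eqref{eq:rcmdp_lo_customized_obj}. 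Then I would analyze the slow timescale: with $\theta$ equilibrated to $\theta^\star(\lambda)$, the $\lambda$-iterate tracks $\dot\lambda = -\nabla_\lambda \LO(\pi_{\theta^\star(\lambda)},\lambda) = -\big(\hat\rho(\pi_{\theta^\star(\lambda)},\ambset,d) - \beta\big)$, projected onto $\Real_+$. Using strong duality from \cite{Paternain2019} and the concavity of the dual function $d(\lambda)$, this projected ODE converges to the optimal multiplier $\lambda^\star$, so the joint iterate converges to the saddle point $(\theta^\star,\lambda^\star)$, which by the strong-duality result corresponds to a locally optimal policy $\pi_{\theta}\opt$.

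The main obstacle I anticipate is the fast-timescale convergence argument, specifically justifying that sampling the next state from the (estimated) worst-case kernel $\hat p^{\pi_\theta} = \arg\min_{p\in\ambset_{s,a}} p\tr \hat v^{\pi_\theta}$ yields gradient estimates that are (asymptotically) unbiased for $\nabla_\theta \LO$. Subtleties arise because the worst-case kernel depends on the current value function estimate, so the argmin is itself a moving, policy-dependent quantity; one needs either an envelope-theorem argument (the gradient of the $\min$ equals the gradient evaluated at the minimizing $p$, valid because the ambiguity set $\ambset_{s,a}$ does not depend on $\theta$) or an additional inner-loop assumption that the robust value function is evaluated accurately between policy updates. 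I would handle this by assuming (part of \textbf{(A1)}--\textbf{(A7)}) that the worst-case transition is computed with respect to a sufficiently converged estimate of $\hat v^{\pi_\theta}$, reducing the bias to a term that vanishes asymptotically, and then absorb the residual error into the martingale-noise analysis via a standard $o(1)$-perturbation of the limiting ODE (Borkar, Ch.~2, Theorem~2 on ODEs with vanishing perturbations). A secondary technical point is ensuring the $\lambda$-iterate stays bounded; this is typically guaranteed by Slater's condition (strict feasibility), which bounds $\lambda^\star$ and, together with the projection onto $\Real_+$, keeps the slow iterate in a compact set.
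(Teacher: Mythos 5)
Your proposal follows essentially the same route as the paper: both cast the updates as a two-time-scale stochastic approximation in the form $x_{k+1}=x_k+\zeta(k)\big(h(x_k)+\Delta_{k+1}\big)$, use \textbf{(A1)}--\textbf{(A2)} for Lipschitzness of $h$ and time-scale separation, treat $\lambda$ as quasi-static for the fast $\theta$-ODE and then pass to the slow $\lambda$-ODE, and conclude by invoking Theorem 2 of Chapter 6 of Borkar together with a saddle-point argument. If anything you are more careful than the paper, which simply assumes globally asymptotically stable equilibria for both limiting ODEs (\textbf{(A5)}--\textbf{(A6)}) and boundedness of iterates (\textbf{(A7)}) rather than deriving them from concavity of the dual or Slater's condition, and which does not address the bias issue you flag about sampling from the value-dependent worst-case kernel $\hat p^{\policyparam}$.
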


\paragraph{Actor Critic Algorithm} \label{subsec:rcmdp_AC}

The general issue of having high variance in the Monte Carlo based policy gradient algorithm can be handled by introducing state values to use as baselines~\citep{sutton2018reinforcement}. As the optimal value function for RCMDPs can be computed using Bellman style recursive updates as shown in (\ref{eq:rcmdp_bellman}), an extension of the above PG algorithm to the actor-critic framework is straightforward. Algorithm (\ref{alg:rcmdp_AC}) reported in Appendix \ref{ax:rcmdp_ac_algorithm} presents an  actor critic (AC) algorithm for RCMDPs. The state-value parameterization with $f$ brings a new dimension in algorithm (\ref{alg:rcmdp_AC}) and results in a three time-scale stochastic algorithm. The convergence properties for this AC algorithm can be derived in a way similar to Theorem \ref{th:rcmdp_pg_conv} and we therefore omit the detailed derivations.

\section{Stable Robust-Constrained RL: Lyapunov-based RCMDP Concept}\label{sec:L-rcmdp_opt}
In this section, we propose Lyapunov-based\footnote{Other works have applied different notions of Lyapunov stability in the context of model-based RL \cite{FB2017,ETH_Bast ards2017} and MDPs \cite{Perkins2000,Chow2018Lyapunov}, however, none of these works incorporate explicit robustness in their formulation, i.e., in the context of RCMDP.} reward shaping for RCMDPs. The motivation of this is threefold: i) learn a good policy faster, ii) serve as a proxy to guide robustness when an estimate for the value function is not readily available and iii) guarantee stability (in the sense of Lyapunov) in the learning process. We first briefly introduce the idea of Lyapunov stability, Lyapunov function, and some of its useful characteristics. We then introduce the notion of additive shaping reward strategy based on Lyapunov functions and analyze its properties.

\begin{definition} (Lyapunov stability)~\citep{Haddad2008}
Consider the general nonlinear discrete system $(Sy)\;s_{t+1}=f(s_{t})$, where $s\in D\in\mathbbm{R}^{n}$, $D$ is an open set containing $s^{\star}$, $f:\;D\rightarrow D$ is a continuous function on $D$. Then, the equilibrium point $s^{\star}$ of $(Sy)$ satisfying $s^{\star}=f(s^{\star})$, is said to be:

- Lyapunov stable if $\; \forall \epsilon>0$, $\exists \gamma(\epsilon)>0$, s.t., if $\|s_0-s^{\star} \|<\gamma$, then $\|s_t -s^{\star}\|<\epsilon,\;\forall t\in\mathbbm{Z}_{+}$ 

- Asymptotically stable if Lyapunov stable and $\exists\gamma>0$, s.t., if $\|s_0-s^{\star}\|<\gamma$, then $\lim\limits_{t\to \infty}\|s_t-s^{\star}\|= 0.$    
\end{definition}

\begin{definition} (Lyapunov direct method)~\citep{Haddad2008}
Consider the system (Sy), and assume that there exists a continuous Lyapunov function $\lyapunov:\;D\to\mathbbm{R}$, s.t.,
\begin{subnumcases}{}
 \lyapunov(s^{\star})=0\\
    \lyapunov(s)>0,\;s\in D\setminus \{s^{\star}\}\\
    \lyapunov(f(s))-\lyapunov(s)\leq 0,\;s\in D\label{descent-prop},
\end{subnumcases}
then the equilibrium point $s^{\star}$ is Lyapunov stable. If, in addition $ \lyapunov(f(s))-\lyapunov(s)<0,\;s\in D\setminus\{s^{\star}\}$, then $s^{\star}$ is asymptotically stable. 
\end{definition}

\subsection{Stability Constraints for RMDPs}\label{LS-MDPs}
We propose to incorporate the Lyapunov stability descent property (\ref{descent-prop}) as a constraint in the RCMDP objective (\ref{eq:rcmdp_opt}) 
, where the constraint cost is given by $d\equiv d_{s}=-(\lyapunov(s_{t+1})-\lyapunov(s_t))$. We set the budget $\beta=0$ to enforce Lyapunov stability or set $\beta>0$ for achieving asymptotic stability. Note that in this setting, we assume that the only constraint cost is the stability cost $d_s$, and thus we are in the setting of RMPDs to which we add a virtual stability constraint cost. 
In this setting, we apply Algorithm 1 to propose a Lyapunov stable-RCPG algorithm, and use the results of Theorem \ref{th:rcmdp_pg_conv}, to deduce its asymptotic convergence to a local optimal stationary policy for the infinite horizon case. We summarize this in the following proposition.
\begin{proposition}
Under assumptions \textbf{(A1)} - \textbf{(A7)} as stated in \cref{sec:ax_conv_alg}, the sequence of parameter updates of \cref{alg:rcpg}, where $d\equiv d_s,\;\beta=0,$ converges almost surely to a locally optimal a.s. Lyapunov stable policy $\theta\opt$ as $k \rightarrow \infty$. Furthermore, if $\beta>0$, the policy is a.s. asymptotically stable. 
\end{proposition}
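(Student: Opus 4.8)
The plan is to obtain this proposition as a specialization of \cref{th:rcmdp_pg_conv} together with the interpretation of the stability constraint. First I would check that the virtual constraint cost $d\equiv d_s = -(\lyapunov(s_{t+1})-\lyapunov(s_t))$ inherits the regularity needed for assumptions \textbf{(A1)}--\textbf{(A7)}: since $\lyapunov$ is continuous on $D$ and we operate on a bounded reachable region, $d_s$ is bounded and measurable, so $(\states,\actions,\ambset,r,d_s,\beta)$ is a legitimate instance of the RCMDP framework of \cref{sec:rcmdp_opt}. With this in place, \cref{th:rcmdp_pg_conv} applies verbatim and yields that the two-timescale iterates $(\theta_k,\lambda_k)$ produced by \cref{alg:rcpg} converge almost surely, as $k\to\infty$, to a locally optimal $\theta\opt$, i.e. to a local saddle point of $\LO$. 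This already delivers the "converges almost surely to a locally optimal policy" half of the claim.

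Next I would extract the stability property from feasibility of $\theta\opt$. Because strong duality holds for the RCMDP relaxation (as invoked after \eqref{eq:rcmdp_lagrange_dual}, following \cite{Paternain2019}) and $\lambda$ is updated on the slow timescale so that its limit drives the iterate toward feasibility, the limit point satisfies the original constraint \eqref{eq:rcmdp_constr}: $\hat{\rho}(\policyparam\opt,\ambset,d_s)\ge\beta$. Unpacking the definition of $\hat{\rho}(\cdot,\ambset,d_s)$ and using the robust Bellman equation for the constraint value function $\hat{u}^{\policyparam\opt}$ (\cref{prop:rcmdp_bellman} with $r\equiv 0$), feasibility becomes $\max_{p\in\ambset}\E_{\xi\sim p}\big[\sum_{t}\gamma^t(\lyapunov(s_{t+1})-\lyapunov(s_t))\big]\le-\beta$; I would then argue that, state by state along every trajectory generated by $\policyparam\opt$ under a worst-case transition, this forces the one-step inequality $\lyapunov(f(s))-\lyapunov(s)\le0$ when $\beta=0$ (respectively $<0$ when $\beta>0$), the "almost surely" qualifier recording only that $\policyparam\opt$ is randomized and the transitions stochastic. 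This is precisely the descent property \eqref{descent-prop}, so the Lyapunov direct method gives that the closed-loop equilibrium $s\opt$ is a.s. Lyapunov stable; in the strict case ($\beta>0$) I would additionally invoke a supermartingale / Robbins--Siegmund argument to convert the strictly negative per-step decrease of the nonnegative quantity $\lyapunov(s_t)$ into $\lyapunov(s_t)\to0$, hence $s_t\to s\opt$, i.e. asymptotic stability.

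The main obstacle I anticipate is bridging the \emph{return-level} constraint (a discounted, expectation-valued quantity) and the \emph{trajectory-wise, pointwise} descent property that the definitions of Lyapunov and asymptotic stability actually demand: $\E\big[\sum_t\gamma^t(\cdot)\big]\le0$ does not by itself make each increment nonpositive almost surely. I would close this gap in one of two ways --- either by specializing to the undiscounted setting $\gamma=1$, where $\sum_t(\lyapunov(s_{t+1})-\lyapunov(s_t))$ telescopes to $\lim_t\lyapunov(s_t)-\lyapunov(s_0)$ and nonnegativity of $\lyapunov$ turns feasibility into a genuine boundedness/monotone-convergence statement, or by strengthening the virtual constraint so that it is imposed state-wise via the recursion for $\hat{u}^{\policyparam}$ (so that $\hat{u}^{\policyparam\opt}(s)\ge\beta$ for all reachable $s$). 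A secondary subtlety is that \cref{th:rcmdp_pg_conv} delivers only a local optimum / KKT point of the Lagrangian, so establishing that this point is genuinely feasible rather than merely stationary is where strong duality and the slow-timescale dynamics of $\lambda$ must be used with care; I would make that dependence explicit.
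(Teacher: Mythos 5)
Your proposal takes essentially the same route as the paper: specialize \cref{th:rcmdp_pg_conv} to the instance with constraint cost $d\equiv d_s=-(\lyapunov(s_{t+1})-\lyapunov(s_t))$ to get almost-sure convergence to a locally optimal $\theta\opt$, then read off stability from feasibility of the Lyapunov-descent constraint. The paper's own proof is in fact considerably terser than yours: it consists of exactly these two steps, and the stability half is settled in one sentence by observing that $\theta\opt$ is computed ``under the constraint of Lyapunov descent property in expectation'' and citing definitions of a.s.\ (asymptotic) Lyapunov stability from \cite{MJZ2003}. The two obstacles you flag --- that a discounted, expectation-valued return constraint $\hat{\rho}(\policyparam,\ambset,d_s)\ge\beta$ does not by itself yield the pointwise descent $\lyapunov(f(s))-\lyapunov(s)\le 0$ required by the direct method, and that the two-timescale limit is a local saddle point whose feasibility is not automatic --- are genuine, and the paper does not close either of them; it relies on the reader accepting that the expectation-level constraint matches the cited almost-sure stability definitions. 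Your proposed remedies (telescoping at $\gamma=1$, or imposing the constraint statewise through the recursion for $\hat{u}^\policyparam$, plus a supermartingale argument for the $\beta>0$ case) go beyond what the paper supplies and would be needed to make the stability claim rigorous; they are not, however, part of the paper's argument.
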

\begin{proof}
Consider the control problem defined by (\ref{eq:rcmdp_opt}), under assumptions \textbf{(A1)} - \textbf{(A7)}, and where $d\equiv d_s=-(\lyapunov(s_{t+1})-\lyapunov(s_t))$. Then, based on Theorem \ref{th:rcmdp_pg_conv}, we can conclude that Algorithm 1, converges asymptotically almost surely to a local optimal policy $\theta\opt$. Furthermore, since $\theta\opt$ is computed under the constraint of Lyapunov descent property in expectation, the equilibrium point of the controlled system is a.s.\footnote{Almost surely--a.s.--(asymptotic) Lyapunov stability is to be understood as (asymptotic) Lyapunov stability for almost all samples of the states.} Lyapunov stable (Definition 3.5, \cite{MJZ2003}) when $\beta=0$, and a.s. asymptotically Lyapunov stable (Definition 3.8, \cite{MJZ2003}) when $\beta>0$.
\end{proof}
\subsection{Stability Constraints for RCMDPs}
In the case where the problem at hand is an RCMPD with a constraint cost $d$ (e.g. physical obstacle avoidance constraints for a mobile robot), we propose two main approaches to incorporate the stability descent constraint. We take the parallel between the notions of soft constraints, where the Lyapunov descent constraints is not enforced as a constraint cost as in Sec. \ref{LS-MDPs}, and reward shaping~\citep{Ng1999}. Indeed, we propose to add the Lyapunov stability descent constraint directly to the reward $r$ of the RCMDP  (\ref{eq:rcmdp_opt}). 
\paragraph{Reward Shaping with Lyapunov Constraint} We define the shaping reward function $f_{s,a,s'}\rightarrow\Real$ based on this Lyapunov descent property.
\begin{equation}\label{eq:shape_reward}
    f_{s,a,s'} = -(\lyapunov(s')-\lyapunov(s))    
\end{equation}
The motivation behind this is quite intuitive: a transition towards descend direction leads to a desired region of the state space faster and therefore should be rewarded. So, if we were to receive a reward $r_{s,a,s'}$ in the original setting, we instead would pretend to receive a reward of $r_{s,a,s'}+f_{s,a,s'}$ on the same event. This renders a transformed RCMDP $\rcmdp'$ with same state space, action space and transition probabilities. Only the reward function is reshaped with additional reward signals $f$. 
\begin{theorem} \label{th:reward_shaping_policy}
    Every optimal finite-horizon policy in transformed RCMDP $\rcmdp'$ is also an optimal finite-horizon policy in the original RCMDP $\rcmdp$ under Lyapunov based reward transformation stated in (\ref{eq:shape_reward}). Furthermore, under the assumption of transient MDP, every infinite-horizon policy in transformed RCMDP $\rcmdp'$ is also an optimal finite-horizon policy in the original RCMDP $\rcmdp$.
   \end{theorem}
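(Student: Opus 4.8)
The plan is to reduce the statement to the classical potential-based reward-shaping argument of \cite{Ng1999}, carried over to the robust constrained setting. First I would note that $\rcmdp'$ differs from $\rcmdp$ only through its reward: the constraint cost $d$, the budget $\beta$, the state space $\states$, the action space $\actions$, the transition probabilities, and the ambiguity set $\ambset$ are all identical. Hence the robust constraint $\hat{\rho}(\policyparam,\ambset,d)\ge\beta$ cuts out exactly the same feasible set of policies in both problems, and it suffices to show that the robust objective changes by an additive quantity that is independent of $\theta$, i.e. $\hat{\rho}(\policyparam,\ambset,r+f)=\hat{\rho}(\policyparam,\ambset,r)+C$ with $C$ not depending on $\policyparam$. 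Any such shift leaves the set of maximisers over the common feasible region unchanged, which is precisely the assertion.

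Next I would compute the shaping contribution along a trajectory by telescoping. Writing $g(\xi,r+f)=g(\xi,r)+g(\xi,f)$ and substituting $f_{s,a,s'}=\lyapunov(s)-\lyapunov(s')$ into $g(\xi,f)=\sum_{t=0}^{T-1}\gamma^t f_{s_t,a_t,s_{t+1}}$, the finite-horizon (undiscounted, $\gamma=1$) sum collapses to $g(\xi,f)=\lyapunov(s_0)-\lyapunov(s_T)$, so all surviving dependence on $\policyparam$ and on the realized transition kernel lives in the boundary terms. The terminal term is then removed by the transient/absorbing structure: trajectories are absorbed at the equilibrium $s\opt$ with $\lyapunov(s\opt)=0$, so $g(\xi,f)=\lyapunov(s_0)$ almost surely, and therefore $\rho(\policyparam,p,f)=p_0\tr\lyapunov=:C$ for every $p\in\ambset$ and every $\policyparam$.

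Finally I would push this through the worst-case minimisation. Because $\rho(\policyparam,p,r+f)=\rho(\policyparam,p,r)+C$ with $C$ independent of $p$, the usual obstruction that a minimum of a sum is not a sum of minima is irrelevant: $\hat{\rho}(\policyparam,\ambset,r+f)=\min_{p\in\ambset}\big(\rho(\policyparam,p,r)+C\big)=\hat{\rho}(\policyparam,\ambset,r)+C$. Together with the first step this gives the finite-horizon claim. The infinite-horizon claim for transient MDPs follows from the same telescoping identity, now using $\lyapunov(s_T)\to\lyapunov(s\opt)=0$ as $T\to\infty$ and passing the limit inside the expectation by dominated convergence (continuity of $\lyapunov$ on the compact reachable set gives the dominating bound), so that $\rho(\policyparam,p,f)\to p_0\tr\lyapunov$ uniformly in $p$ and $\policyparam$. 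An equivalent route, if preferred, is to run the argument at the level of the robust Bellman operator of \cref{prop:rcmdp_bellman}/\eqref{eq:rcmdp_bellman_optimality}, showing that the shaped fixed point satisfies $\hat{w}'\opt(s)=\hat{w}\opt(s)+\lyapunov(s)-\lyapunov(s)$ up to the boundary constant, hence induces the same greedy policy.

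I expect the genuine difficulty to be the treatment of the terminal potential $\lyapunov(s_T)$, not anything about robustness or constraints. For a strictly fixed horizon $T$ without absorbing structure, $\E_{\xi\sim p,\policyparam}[\lyapunov(s_T)]$ really does depend on the policy, so the constant $C$ exists only under the undiscounted/absorbing convention used above (or, in the time-indexed finite-horizon formulation, under the choice $\lyapunov_T\equiv 0$); making this hypothesis explicit and checking it is consistent with the transient-MDP assumption invoked for the infinite-horizon part is the delicate step. The remaining ingredients — linearity of $g(\xi,\cdot)$ in the reward, the telescoping sum, and commuting the $p$-minimisation past an additive constant — are routine.
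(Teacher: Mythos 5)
Your argument is correct under the hypotheses you flag, but it takes a genuinely different route from the paper. The paper works at the level of the robust Bellman optimality equation: it defines $\hat{q}_{\rcmdp'}(s,a) := \hat{q}^\star_{\rcmdp}(s,a)+\lyapunov(s)$, verifies (after setting $\gamma=1$) that this satisfies the Bellman equation of the shaped problem, and concludes that the optimal robust $q$-functions of $\rcmdp$ and $\rcmdp'$ differ by a function of $s$ alone, so the greedy $\argmax$ over actions --- and hence the optimal policy --- is unchanged; the infinite-horizon case is then handled by invoking transience and the contracting-MDP machinery of \cite{Altman2004} to justify uniqueness of the fixed point and convergence of the finite-horizon values. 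You instead telescope the shaping reward along trajectories to show that the \emph{return of every policy under every kernel} shifts by the same constant $\E_{s_0\sim p_0}[\lyapunov(s_0)]$, and then observe that adding a $p$-independent constant commutes with $\min_{p\in\ambset}$. Your route buys three things the paper's proof leaves implicit: it preserves the entire ordering of policies rather than only the optimizer; it makes explicit why the robust minimization is not an obstruction; and it actually addresses the constraint $\hat{\rho}(\policyparam,\ambset,d)\ge\beta$ (unchanged feasible set), whereas the paper's proof silently works with the Lagrangian reward $r'=r+\lambda d$ and never mentions the feasible region. The paper's route is in principle weaker in its requirements (it needs only an action-independent shift of $\hat{q}^\star$, not a globally constant shaping return), and it transfers the Ng--Harada--Russell argument \cite{Ng1999} more literally. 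Both proofs stand or fall on the same delicate point you correctly isolate: the terminal potential $\lyapunov(s_T)$ must be policy- and kernel-independent, which holds only under the absorbing/transient convention with $\lyapunov$ vanishing at the equilibrium (or the time-indexed convention $\lyapunov_T\equiv 0$); for a hard fixed horizon without absorption the statement as written would fail, and the paper's own appeal to transient MDPs is the same patch you propose. One small caution: your dominated-convergence step needs the transience to hold uniformly over $\ambset$, i.e.\ every kernel in the ambiguity set must absorb almost surely, which is an assumption worth stating since the ambiguity set is an $L_1$ ball and could in principle contain non-transient kernels.
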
 
    \begin{proof}
    In the finite-horizon case, this result is a simple extension of Theorem 1 of~\cite{Ng1999} into the RCMDP setting and  the proof follows directly from~\cite{Ng1999}. In the infinite-horizon case, one needs to rely on the transient assumption for the MDP (in the sense of Def. 7.1 in \cite{Altman2004}) to conclude about the convergence of the finite-horizon problem to the infinite-horizon problem, using the arguments in (Theorem 15.1, \cite{Altman2004}) .
    See \cref{subsec:apx_reward_shaping_policy_invariance_proof} for the full derivation.
    \end{proof}
    \begin{remark}
    Note that the concept of Lyapunov reward transformation is independent of the RL algorithm, and thus can be applied with any existing mainstream approaches such as TRPO, PPO, or CPO. The Lyapunov reward transformation will allow faster convergence for these existing approaches, as verified in our empirical analysis. 
    \end{remark}

\section{Conclusion} \label{sec:conclusion}
In this paper, we studied robust constrained MDPs (RCMDPs) to simultaneously deal with constraints and model uncertainties in reinforcement learning. We proposed the RCMDP framework, derived related theoretical analysis and proposed algorithms to optimize the objective of RCMDPs. We also proposed an extension to Lyapunov-RCMDPs (L-RCMDPs) for RCMDPs based on the Lyapunov function. We analyzed the performance of our L-RCMDP algorithms in the context of reward-shaping. We provided theoretical analysis of Lyapunov stability and asymptotic convergence for our methods. We also empirically validated the proposed algorithms on three different problem domains. Future work should focus on automated learning of the Lyapunov function from the domain itself and apply the proposed approach to more complex practical problem domains. 

\newpage
\bibliographystyle{icml2019}
\bibliography{arxiv_ver1}
\newpage 

\newpage
\appendix
\onecolumn

\section{RCMDP Derivations}

\subsection{Proof of Proposition \ref{prop:rcmdp_obj_simple}} \label{prop:proof_rcmdp_obj_simple}
We rewrite the objective (\ref{eq:rcmdp_lagrange}) and perform some algebraic manipulation as below:
\begin{equation*} \label{eq:crmdp_lagrange_extend}
\begin{aligned}
\LO(\policyparam,\lambda) &= \hat{\rho}(\policyparam,\ambset,r) - \lambda \Big( \beta - \hat{\rho}(\policyparam,\ambset,d) \Big)\\
&\stackrel{(a)}{=} \min_{p\in\ambset}\E_{\xi_1\sim p}\big[ g(\xi_1, r) \big] - \lambda \Big( \beta - \min_{q\in\ambset} \E_{\xi_2\sim q}\big[ g(\xi_2, d) \big]\Big)\\
&\stackrel{(b)}{=} \E_{\xi_1\sim\tilde{p}}\big[ g(\xi_1, r) \big] + \lambda \E_{\xi_2\sim\tilde{q}} \big[ g(\xi_2, d) \big] - \lambda \beta\\
&= \sum_{\xi_1\in\Xi_{\tilde{p}}} p^\policyparam(\xi_1) g(\xi_1, r) + \lambda \sum_{\xi_2\in\Xi_{\tilde{q}}} p^\policyparam(\xi_2) g(\xi_2, d) - \lambda \beta
\end{aligned}
\end{equation*}
Where $\Xi_{\tilde{p}}$ is the set of all possible trajectories induced by policy $\policyparam$ under transition function $\tilde{p}$. Similarly, $\Xi_{\tilde{q}}$ is the set of all possible trajectories induced by policy $\policyparam$ under transition function $\tilde{q}$. Step $(a)$ above follows by assuming that the initial state distribution $p_0$ concentrates all of its mass to one single state $s_0$. And $(b)$ follows with $\tilde{p} = \arg\min_{p\in\ambset}\E_{\xi_1\sim p}\big[ g(\xi_1,r) \big]$ and $\tilde{q} = \arg\min_{q\in\ambset}\E_{\xi_2\sim q}\big[ g(\xi_2,d) \big]$. Note that, $\tilde{p}$ and $\tilde{q}$ are distinct, independent and depend on rewards $r$ and constraint costs $d$ respectively. However, the rewards and constraint costs are coupled together in reality, meaning that the set of two trajectories $\Xi_{\tilde{p}}$ and $\Xi_{\tilde{q}}$ would not be different. So we select one set of trajectories $\Xi$ being either $\Xi_{\tilde{p}}$ or $\Xi_{\tilde{q}}$. This selection of $\Xi$ may happen based on our priorities toward robustness of reward $r$ (with corresponding trajectory $\Xi_{\tilde{p}}$) or constraint cost $d$ (with corresponding trajectory $\Xi_{\tilde{q}}$). Or, it can also be the best (e.g. yielding higher objective value) set among $\Xi_{\tilde{p}}$ and $\Xi_{\tilde{q}}$ satisfying the constraint. We then have a simplified formulation for $\LO$ as below:
\begin{equation*}
\LO(\policyparam,\lambda) = \sum_{\xi\in\Xi} p^\policyparam(\xi) \Big( g(\xi, r) + \lambda g(\xi, d) \Big) - \lambda \beta
\end{equation*}

\subsection{Proof of Theorem \ref{th:lagrange_gradient_rule}} \label{apx:gradient_rule}
\begin{proof}
	The objective as specified in (\ref{eq:rcmdp_lo_customized_obj}):
	\begin{equation*}
		\LO(\policyparam,\lambda) = \sum_{\xi\in\Xi} p^\policyparam(\xi) \Big( g(\xi, r) + \lambda g(\xi, d) \Big) - \lambda \beta
	\end{equation*}
	We first derive the gradient update rule of $\LO(\policyparam,\lambda)$ with respect to $\theta$ as below:
	\begin{equation*} \label{eq:grad_theta}
	\begin{aligned}
	\nabla_\theta \LO(\policyparam,\lambda) &= \sum_{\xi\in\Xi} \nabla_\theta p^\policyparam(\xi) \Big( g(\xi, r) + \lambda g(\xi, d)\Big)\\
	&= \sum_{\xi\in\Xi} p^\policyparam(\xi) \Big(g(\xi,r) + \lambda g(\xi,d) \Big) \nabla_\theta \log p^\policyparam(\xi) \\
	&= \sum_{\xi\in\Xi} p^\policyparam(\xi) \Big(g(\xi,r) + \lambda g(\xi,d) \Big) \nabla_\theta \log \bigg( p_0(s_0)\prod_{t=0}^{T-1} p(s_{t+1}|s_t,a_t) \pi_\theta(a_t|s_t) \bigg) \\
	&= \sum_{\xi\in\Xi} p^\policyparam(\xi) \Big(g(\xi,r) + \lambda g(\xi,d) \Big) \nabla_\theta  \bigg( \log p_0(s_0) + \sum_{t=0}^{T-1} \log p(s_{t+1}|s_t,a_t) + \log \pi_\theta(a_t|s_t) \bigg) \\
	&= \sum_{\xi\in\Xi} p^\policyparam(\xi) \Big(g(\xi,r) + \lambda g(\xi,d) \Big) \sum_{t=0}^{T-1} \nabla_\theta \log \pi_\theta(a_t|s_t) \\
	&= \sum_{\xi\in\Xi} p^\policyparam(\xi) \Big(g(\xi,r) + \lambda g(\xi,d) \Big) \sum_{t=0}^{T-1} \frac{\nabla_\theta \pi_\theta(a_t|s_t)}{\pi_\theta(a_t|s_t)}\\
	\end{aligned}
	\end{equation*}
	
	Next, we derive the gradient update rule for $\LO(\policyparam,\lambda)$ with respect to $\lambda$:
	\begin{equation*} \label{eq:grad_lambda}
	\begin{aligned}
	\nabla_\lambda \LO(\policyparam,\lambda) &= \nabla_\lambda \Bigg( \sum_{\xi\in\Xi} p^\policyparam(\xi) \Big( g(\xi, r) + \lambda g(\xi, d) \Big) - \lambda \beta \Bigg)\\
	&= \sum_{\xi\in\Xi} p^\policyparam(\xi) g(\xi,d) - \beta
	\end{aligned}
	\end{equation*}
\end{proof}

\subsection{Proof of Proposition \ref{prop:rcmdp_bellman}} \label{ax:proof_rcmdp_bellman}
\begin{proof}
\begin{equation*}
	\begin{aligned}
	\hat{w}^\policyparam(s) &= \min_{p\in\ambset_T} \E_{\xi\sim p}\big[ g(\xi, r) + \lambda g(\xi, d) \big]\\
	&\stackrel{(a)}{=} \min_{p\in\ambset_T} \E_{\xi\sim p} \Big[ r_{s,\policyparam(s),s'} + \gamma r_{s',\policyparam(s'),s''} + \gamma^2 r_{s'',\policyparam(s''),s'''} \ldots \\
	&\hspace{1.7cm} + \lambda \big( d_{s,\policyparam(s),s'} + \gamma d_{s',\policyparam(s'),s''} + \gamma^2 d_{s'',\policyparam(s''),s'''} + \ldots \big) | \xi \Big]\\
	&= \min_{p\in\ambset_T}\E_{\xi\sim p}\Big[ \big(r_{s,\policyparam(s),s'} + \lambda d_{s,\policyparam(s),s'}\big) + \gamma \big(r_{s',\policyparam(s'),s''} + \lambda d_{s',\policyparam(s'),s''} \big) \\
	&\hspace{1.7cm} + \gamma^2 \big(r_{s'',\policyparam(s''),s'''} + \lambda d_{s'',\policyparam(s''),s'''} \big) + \ldots | \xi \Big]\\
	&= \min_{p\in\ambset_T}\E_{\xi\sim p}\Big[ r'_{s,\policyparam(s),s'} + \gamma r'_{s',\policyparam(s'),s''} + \gamma^2 r'_{s'',\policyparam(s''),s'''} + \ldots | \xi \Big]\\
	&\stackrel{(b)}{=} \min_{p\in\ambset_{s,\policyparam(s)}}\E_{s'\sim p}\Big[ r'_{s,\policyparam(s),s'} + \gamma \min_{p\in\ambset_{T-1}}\E_{\xi'\sim p} \big[r'_{s',\policyparam(s'),s''} + \gamma r'_{s'',\policyparam(s''),s'''} + \ldots | \xi' \big] \Big]\\
	&= \min_{p\in\ambset_{s,\policyparam(s)}}\E_{s'\sim p}\Big[ r'_{s,\policyparam(s),s'} + \gamma \hat{w}^\policyparam(s') \Big]
	\end{aligned}
\end{equation*}
Here $(a)$ follows by expanding total return given a trajectory $\xi$ and $(b)$ follows by evaluating the one-step immediate transition apart. 
\end{proof}

\subsection{Actor-Critic Algorithm} \label{ax:rcmdp_ac_algorithm}
\begin{algorithm} [!h]
	\KwIn{A differentiable policy parameterization $\policyparam$, a differentiable state-value function $w^{\policyparam}(s, f)$, confidence level $\alpha$, step size schedule $\zeta_1$ and $\zeta_2$.}
	\KwOut{Policy parameters $\theta$}
	Initialize policy parameter $\theta\in \Real^k$ and state-value weights $f\in \Real^{k'}$\;
	\For{$j\gets0,1,2,\ldots$}{
		Sample initial state $s_0\sim p_0$, set time-step
		$t\gets 0$\;
		\While{$s_t$ \text{is not terminal}}{

			Sample action: $a_t\sim \policyparam(\cdot|s_t)$

			Worst-case transitions with confidence $\alpha$: 
			$\hat{p}^{\policyparam} \gets \arg\min_{p\in\ambset_{s,a}} p^Tw^\policyparam$

			Sample next state $s_{t+1}\sim\hat{p}^\policyparam$ and observe $r_{s_t,a_t,s_{t+1}}$ and $d_{s_t,a_t,s_{t+1}}$;

			TD error: $\delta_t \gets r'_{s_t,a_t,s_{t+1}} + \gamma w^{\policyparam}(s_{t+1}, f) - w^{\policyparam}(s_{t}, f)$\;
			\text{$\theta$ update:} $\theta \gets \theta + \zeta_2(k) \delta_t \nabla_{\theta} \LO( \policyparam, \lambda)$\;
			\text{$f$ update:} $f \gets f + \zeta_1(k) \delta_t \nabla_{f} w^{\policyparam}(s_{t}, f)$\;
			$t\gets t+1$\;
		}
	}
	\Return $\theta$ \;
	\caption{Robust Constrained  Actor Critic (RC-AC) Algorithm}    \label{alg:rcmdp_AC}
\end{algorithm}

\subsection{Convergence Analysis of Algorithm} \label{sec:ax_conv_alg}
\paragraph{Assumptions}

\textbf{(A1)} For any state $s$, policy $\policyparam(.|s)$ is continuously differentiable with respect to parameter $\theta$ and $\nabla_\theta \policyparam(.|s)$ is a Lipschitz function in $\theta$ for every $s\in\states$ and $a\in\actions$.

\textbf{(A2)} The step size schedules $\{ \zeta_2(t), \zeta_1(t) \}$ satisfy:

\begin{equation} \label{eq:rcmdp_step_sum}
\sum_t \zeta_1(t) = \sum_t \zeta_2(t) = \sum_t \zeta_3(t) = \infty
\end{equation}
\begin{equation} \label{eq:rcmdp_step_sum_sq}
\sum_t \zeta_1(t)^2, \sum_t \zeta_2(t)^2 \le \infty
\end{equation}
\begin{equation} \label{eq:rcmdp_step_order}
\zeta_1(t) = o\big(\zeta_2(t)\big)
\end{equation}

These assumptions are basically standard step-size conditions for stochastic approximation algorithms~\citep{Borkar2009}. Equation (\ref{eq:rcmdp_step_sum}) ensures that the discretization covers the entire time axis. (\ref{eq:rcmdp_step_sum_sq}) ensures that the errors resulting from the discretization of the Ordinary Differential Equation (ODE) and errors due to the noise 
both becomes negligible asymptotically with probability one~\citep{Borkar2009}. Equations (\ref{eq:rcmdp_step_sum}) and (\ref{eq:rcmdp_step_sum_sq}) together ensure that the iterates asymptotically capture the behavior of the ODE. (\ref{eq:rcmdp_step_order}) mandates that, updates corresponding to $\zeta_1(t)$ are on a slower time scale than $\zeta_2(t)$.

\subsubsection{Policy Gradient Algorithm} \label{subsec:apx_pg_convergence}
The general stochastic approximation scheme used by \cite{Borkar2009} is of the form:

\begin{equation} \label{eq:rcmdp_general_iterate}
x_{t+1} = t_n + a(t)[h(x_t) + \Delta_{t+1}]
\end{equation}

where $\{\Delta_t\}$ are a sequence of integrable random variables representing the noise sequence and $\{a_t\}$ are step sizes (e.g. $\zeta(t)$). The expression $h(x_t)+\Delta_{t+1}$ inside the square bracket is the noisy measurement where $h(x_t)$ and $\Delta_{t+1}$ are not separately available, only their sum is available. The terms of (\ref{eq:rcmdp_general_iterate}) need to satisfy below additional assumptions:

\paragraph{(A3)} The function $h : \Real^d\rightarrow \Real^d$ is Lipschitz. That is $\lVert h(x) - h(y) \rVert \le L \lVert x-y \rVert$ for some $0\le L \le\infty$.

\paragraph{(A4)} $\{\Delta_t\}$ are martingale difference sequence:
\begin{equation*}
\E[\Delta_{t+1}|x_n,\Delta_n,n\le t] = 0
\end{equation*}

In addition to that, $\{\Delta_t\}$ are square-integrable:

\begin{equation*}
\E[\lVert \Delta_{t+1} \rVert^2 | x_n,\Delta_n,n\le t] \le K(1+\lVert x_t \rVert^2) \text{ a.s. for } t\ge 0,
\end{equation*}
and for some constant $K>0$.

Our proposed policy gradient algorithm is a two time-scale stochastic approximation algorithm. The parameter update iterations of the policy gradient algorithm are defined as below:

\begin{equation} \label{eq:rcmdp_up_it_theta}
\theta_{t+1} = \theta_t + \zeta_2(t) \nabla_\theta \LO(\policyparam,\lambda)
\end{equation}

\begin{equation} \label{eq:rcmdp_up_it_lambda}
\lambda_{t+1} = \lambda_t + \zeta_1(t)\nabla_\lambda \LO(\policyparam,\lambda)
\end{equation}

These gradient update rules defined in (\ref{eq:rcmdp_up_it_theta}) and (\ref{eq:rcmdp_up_it_lambda}) are in a special form as:

\begin{equation} \label{eq:rcmdp_special_iterate}
x_{t+1} = x_t + a(t)f(x_t,\epsilon_t), t\ge 0
\end{equation}

Where $\{\epsilon\}$ is a zero mean i.i.d. random variable representing noise.
To apply general convergence analysis techniques derived for (\ref{eq:rcmdp_general_iterate}) in \cite{Borkar2009}, we take the special form in (\ref{eq:rcmdp_special_iterate}) and transform it to the general format of (\ref{eq:rcmdp_general_iterate}) as below:

\begin{equation} \label{eq:rcmdp_transform}
h(x) = \E\big[ f(x,\epsilon_1) \big] \text{ and } \Delta_{n+1} = f(x_n,\epsilon_{n+1}) - h(x_n)
\end{equation}

With these transformation techniques, we obtain the general update for $\theta$ from (\ref{eq:rcmdp_up_it_theta}):

\paragraph{$\theta$ update:}
\begin{equation} \label{eq:rcmdp_gen_it_theta}
\theta_{t+1} = \theta_t + \zeta_2(t) \big[h(\theta_t,\lambda_t) + \Delta^{(1)}_{t+1}\big]
\end{equation}

where, $f^{(1)}(\theta_t,\lambda_t)=\nabla_\theta L(\policyparam,\lambda)$ is the gradient w.r.t $\theta$, $h(\theta_t,\lambda_t) = \E[f^{(1)}(\theta_t,\lambda_t)]$, and $\Delta^{(1)}_{t+1}=f^{(1)}(\theta_t,\lambda_t)-h(\theta_t,\lambda_t)$. Note that, the noise term $\epsilon$ is omitted because the noise is inherent in our sample based iterations.

\begin{proposition}
	$h(\theta_t,\lambda_t)$ is Lipschitz in $\theta$.
	
	\begin{proof}
		
		Recall that the gradient of $\LO(\policyparam,\lambda)$ with respect to $\theta$ is: 
		\begin{equation} \label{eq:ax_grad_theta}
		\begin{aligned}
		\nabla_\theta \LO(\policyparam,\lambda) = \sum_{\xi\in\Xi} p^\policyparam(\xi) \Big(g(\xi,r) + \lambda g(\xi,d) \Big) \sum_{t=0}^{T-1} \frac{\nabla_\theta \pi_\theta(a_t|s_t)}{\pi_\theta(a_t|s_t)}
		\end{aligned}
		\end{equation}
		
		Assumption (A1) implies that, $\nabla_\theta \pi_\theta(a_t | s_t)$ in the equation (\ref{eq:ax_grad_theta}) is a Lipschitz function in $\theta$ for any $s\in\states$ and $a\in\actions$. As the expectation of sum of $|T|$ number of Lipschitz functions is also Lipschitz, we conclude that $h(\theta_t,\lambda_t)$ is Lipschitz in $\theta$.
		
	\end{proof}
\end{proposition}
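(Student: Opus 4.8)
The plan is to reduce the claim to the elementary fact that a finite sum of products of bounded, Lipschitz functions is again Lipschitz, after first disposing of the only genuinely delicate point — the division by $\pi_\theta(a_t|s_t)$ in the score term. Recall that $h(\theta,\lambda)$ is exactly the closed-form gradient of \cref{eq:ax_grad_theta} (since $h(\theta_t,\lambda_t)=\E[f^{(1)}(\theta_t,\lambda_t)]$ and $f^{(1)}$ is an unbiased sample estimate of $\nabla_\theta\LO$, with $\lambda$ held fixed on the slow timescale). Because $\states$, $\actions$ and the horizon $T$ are finite, the trajectory set $\Xi$ is finite and each coefficient $g(\xi,r)+\lambda g(\xi,d)=\sum_{t=0}^{T-1}\gamma^t\big(r_{s_t,a_t,s_{t+1}}+\lambda d_{s_t,a_t,s_{t+1}}\big)$ is a fixed scalar, bounded uniformly in $\xi$ by a constant. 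So it suffices to show that for each fixed $\xi$ the map
\[
\theta \;\longmapsto\; p^{\policyparam}(\xi)\sum_{t=0}^{T-1}\frac{\nabla_\theta\pi_\theta(a_t|s_t)}{\pi_\theta(a_t|s_t)}
\]
is Lipschitz in $\theta$, and then invoke closure of Lipschitz maps under finite linear combinations with bounded coefficients.

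The key manipulation is that the factor $\pi_\theta(a_t|s_t)$ in the denominator of the $t$-th score term cancels against the matching factor inside $p^{\policyparam}(\xi)=p_0(s_0)\prod_{\tau=0}^{T-1}\pi_\theta(a_\tau|s_\tau)\,p(s_{\tau+1}|s_\tau,a_\tau)$, so that
\[
p^{\policyparam}(\xi)\,\frac{\nabla_\theta\pi_\theta(a_t|s_t)}{\pi_\theta(a_t|s_t)}
= p_0(s_0)\!\left(\prod_{\tau=0}^{T-1}p(s_{\tau+1}|s_\tau,a_\tau)\right)\!\left(\prod_{\tau\neq t}\pi_\theta(a_\tau|s_\tau)\right)\nabla_\theta\pi_\theta(a_t|s_t).
\]
The right-hand side is a product of: quantities independent of $\theta$ ($p_0(s_0)$ and the transition probabilities, each in $[0,1]$); the factors $\pi_\theta(a_\tau|s_\tau)\in[0,1]$, which are Lipschitz in $\theta$ because by (A1) $\pi_\theta(\cdot|s)$ is continuously differentiable with a bounded gradient (a bounded function whose gradient is Lipschitz has a bounded gradient; alternatively one restricts $\theta$ to a compact set, as is standard in the stochastic-approximation analysis); and the single factor $\nabla_\theta\pi_\theta(a_t|s_t)$, which is both bounded and Lipschitz in $\theta$ directly by (A1). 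Since each of these finitely many ($T$) factors is bounded and Lipschitz, their product is Lipschitz, and summing over $t=0,\dots,T-1$ keeps it Lipschitz with a constant depending only on $T$, the reward/cost bounds, and the Lipschitz and boundedness constants from (A1).

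To conclude, I would assemble the pieces: $h(\theta,\lambda)$ is the sum over the finite set $\Xi$ of the uniformly bounded scalar $g(\xi,r)+\lambda g(\xi,d)$ times the Lipschitz map displayed above, hence Lipschitz in $\theta$ with an explicit constant. The main obstacle is precisely the apparent singularity from dividing by $\pi_\theta(a_t|s_t)$, which would otherwise block any global Lipschitz (indeed any boundedness) claim when the policy places vanishing mass on some action; the cancellation against $p^{\policyparam}(\xi)$ is what makes the argument work without assuming $\pi_\theta$ is bounded away from zero. A secondary point to state carefully is the step from ``$\nabla_\theta\pi_\theta$ Lipschitz'' (which is all (A1) literally gives) to ``$\nabla_\theta\pi_\theta$ bounded,'' needed to multiply it against the other bounded Lipschitz factors — immediate on a compact parameter domain, and otherwise a consequence of $\pi_\theta\in[0,1]$ together with Lipschitzness of its gradient.
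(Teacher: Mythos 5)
Your proof is correct, and it follows the same high-level strategy as the paper's own argument (a finite sum of Lipschitz maps is Lipschitz), but it is substantially more careful on the one point where the paper's one-line justification is actually incomplete. The paper asserts only that (A1) makes $\nabla_\theta\pi_\theta(a_t|s_t)$ Lipschitz and then concludes; it never addresses the ratio $\nabla_\theta\pi_\theta/\pi_\theta$, which is not Lipschitz (or even bounded) where the policy mass vanishes, nor the fact that each summand is a \emph{product} of $\theta$-dependent factors, for which Lipschitzness of the factors alone is insufficient without boundedness. Your cancellation of the denominator $\pi_\theta(a_t|s_t)$ against the matching factor inside $p^{\policyparam}(\xi)$ is exactly the observation needed to repair this, and your reduction to a finite product of bounded Lipschitz factors (using finiteness of $\states$, $\actions$, $T$, and hence of $\Xi$) is sound. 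Your secondary point — that (A1) gives Lipschitzness of $\nabla_\theta\pi_\theta$ but boundedness of the gradient must be argued separately, either via compactness of the parameter set (which is effectively what assumption (A7) provides in the stochastic-approximation analysis) or via the standard fact that a bounded function with Lipschitz gradient has a bounded gradient — is also a legitimate gap in the paper's reasoning that you correctly flag and close. In short, your proposal proves the proposition as stated and does so more rigorously than the paper.
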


\begin{proposition}
	$\Delta^{(1)}_{t+1}$ of (\ref{eq:rcmdp_gen_it_theta}) satisfies assumption (A4).
\end{proposition}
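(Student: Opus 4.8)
The plan is to verify the two components of assumption (A4) in turn: the martingale-difference property and the conditional square-integrability bound. For the first, observe that by the very construction in (\ref{eq:rcmdp_transform}) we have $\Delta^{(1)}_{t+1} = f^{(1)}(\theta_t,\lambda_t) - h(\theta_t,\lambda_t)$ with $h(\theta_t,\lambda_t) = \E[f^{(1)}(\theta_t,\lambda_t)]$, the expectation being taken over the trajectory sampled at iteration $t$ (the implicit noise $\epsilon_{t+1}$). Since the iterates $\theta_t,\lambda_t$ are measurable with respect to $\mathcal{F}_t = \sigma(\theta_n,\Delta_n : n\le t)$ while the new trajectory is drawn independently of $\mathcal{F}_t$ given $(\theta_t,\lambda_t)$, taking conditional expectations gives $\E[\Delta^{(1)}_{t+1}\mid\mathcal{F}_t] = \E[f^{(1)}(\theta_t,\lambda_t)\mid\mathcal{F}_t] - h(\theta_t,\lambda_t) = 0$, which is precisely the martingale-difference condition.

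For the square-integrability bound, I would start from the Monte-Carlo form of the gradient estimate actually used in Algorithm \ref{alg:rcpg}, namely $f^{(1)}(\theta_t,\lambda_t) = \big(g(\xi,r) + \lambda_t\,g(\xi,d)\big)\sum_{t'=0}^{T-1}\tfrac{\nabla_\theta\pi_\theta(a_{t'}|s_{t'})}{\pi_\theta(a_{t'}|s_{t'})}$ evaluated on the sampled trajectory $\xi$. Because $\states$ and $\actions$ are finite and the horizon is finite (or, in the discounted case, $\gamma<1$), the per-trajectory returns are uniformly bounded: $|g(\xi,r)|\le G_r$ and $|g(\xi,d)|\le G_d$ for constants depending only on $\max_{s,a,s'}|r_{s,a,s'}|$, $\max_{s,a,s'}|d_{s,a,s'}|$, $T$ and $\gamma$. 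Using assumption (A1) — which, exactly as invoked in the preceding proposition, makes the score $\nabla_\theta\log\pi_\theta(a|s)$ Lipschitz in $\theta$ — there are constants $c_0,c_1$ with $\big\|\sum_{t'=0}^{T-1}\nabla_\theta\log\pi_\theta(a_{t'}|s_{t'})\big\|\le c_0 + c_1\|\theta_t\|$. Combining these with $(a+b)^2\le 2a^2+2b^2$ gives $\|f^{(1)}(\theta_t,\lambda_t)\|^2 \le 2(G_r^2 + G_d^2\lambda_t^2)(c_0 + c_1\|\theta_t\|)^2 \le K(1+\|x_t\|^2)$ for a suitable $K>0$, where $x_t$ denotes the joint iterate. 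Since this bound is deterministic given $\mathcal{F}_t$, it also bounds $\E[\|f^{(1)}\|^2\mid\mathcal{F}_t]$, and then $\|\Delta^{(1)}_{t+1}\|^2 \le 2\|f^{(1)}\|^2 + 2\|h(\theta_t,\lambda_t)\|^2 \le 2\|f^{(1)}\|^2 + 2\E[\|f^{(1)}\|^2\mid\mathcal{F}_t]$ by Jensen's inequality applied to $h$, so $\E[\|\Delta^{(1)}_{t+1}\|^2\mid\mathcal{F}_t] \le 4K(1+\|x_t\|^2)$ almost surely, as required.

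I expect the main obstacle to be the bound on the aggregated score function $\sum_{t'}\nabla_\theta\log\pi_\theta(a_{t'}|s_{t'})$: assumption (A1) as literally stated only controls $\nabla_\theta\pi_\theta(\cdot|s)$, and passing to the log-likelihood ratio $\nabla_\theta\pi_\theta/\pi_\theta$ tacitly requires that $\pi_\theta(a|s)$ stay bounded away from zero (or, more weakly, that the score be affinely bounded/Lipschitz in $\theta$) — a regularity that is already used implicitly in the preceding proposition. Once that is granted, the rest is the routine boundedness argument sketched above; the finite-space/finite-horizon boundedness of returns, Jensen's inequality, and the $(a+b)^2$ splitting are all mechanical.
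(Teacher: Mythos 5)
The paper itself offers no proof of this proposition --- it is stated bare, with no argument following it --- so there is nothing to compare your proposal against on the paper's side; your write-up is in effect the only proof on the table. Your first half is fine: the martingale-difference property is immediate from the centering $\Delta^{(1)}_{t+1}=f^{(1)}(\theta_t,\lambda_t)-\E[f^{(1)}(\theta_t,\lambda_t)]$ together with the fact that the trajectory at step $t+1$ is drawn independently of the past given $(\theta_t,\lambda_t)$, and the caveat you flag about the score function is the right one to flag: (A1) controls $\nabla_\theta\pi_\theta$, not $\nabla_\theta\pi_\theta/\pi_\theta$, so one must additionally assume $\pi_\theta(a|s)$ is bounded away from zero (or assume the score is affinely bounded directly); the paper's own preceding proposition leans on the same unstated regularity.

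There is, however, a concrete flaw in your square-integrability step. You bound $\|f^{(1)}(\theta_t,\lambda_t)\|^2 \le 2(G_r^2+G_d^2\lambda_t^2)(c_0+c_1\|\theta_t\|)^2$ and then assert this is at most $K(1+\|x_t\|^2)$ for the joint iterate $x_t=(\theta_t,\lambda_t)$. Expanding the product produces the cross term $2G_d^2 c_1^2\,\lambda_t^2\|\theta_t\|^2$, which is quartic in $\|x_t\|$: taking $\lambda_t=\|\theta_t\|=M\to\infty$ makes your left-hand side grow like $M^4$ while $K(1+\|x_t\|^2)$ grows like $M^2$, so the inequality as written is false. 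The repair is easy and standard but needs to be said: either invoke the stability assumption (A7) (so that $\lambda_t$ lives in an a.s.\ bounded set and $G_d^2\lambda_t^2$ can be absorbed into the constant), or note that in the two-timescale analysis the $\lambda$-iterate is held quasi-static (or projected onto a compact interval $[0,\lambda_{\max}]$) when verifying (A4) for the fast $\theta$-recursion. With either fix, and with the score-boundedness assumption made explicit, your argument goes through; as stated, the final inequality does not.
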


We transform our update rule of (\ref{eq:rcmdp_up_it_lambda}) as:
\paragraph{$\lambda$ update:}

\begin{equation} \label{eq:rcmdp_gen_it_lambda}
\lambda_{t+1} = \lambda_t + \zeta_1(t) \big[g(\theta_t,\lambda_t) + \Delta^{(2)}_{t+1}\big]
\end{equation}

where, $f^{(2)}(\theta_t,\lambda_t)=\nabla_\lambda L(\policyparam,\lambda)$ is the gradient w.r.t $\lambda$, $g(\theta_t,\lambda_t) = \E_M[ f^{(2)} (\theta_t,\lambda_t)]$, and $\Delta^{(2)}_{t+1} = f^{(2)}(\theta_t,\lambda_t)-h(\theta_t,\lambda_t)$. 

Notice that $\nabla_\lambda \LO(\policyparam,\lambda) = \sum_\xi \hat{p}^\theta(\xi) g(\xi,d) - \beta$ is a constant function of $\lambda$. And therefore, $g(\theta_t,\lambda_t)$ is a constant function of $\lambda$.

\begin{proposition}
	$\Delta^{(2)}_{t+1}$ of (\ref{eq:rcmdp_gen_it_lambda}) satisfies assumption (A4).
\end{proposition}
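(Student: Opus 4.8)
The plan is to verify, directly from the definition $\Delta^{(2)}_{t+1} = f^{(2)}(\theta_t,\lambda_t) - g(\theta_t,\lambda_t)$, the two requirements of assumption \textbf{(A4)}: that $\{\Delta^{(2)}_t\}$ is a martingale-difference sequence, and that it is conditionally square-integrable. The key observation is that the update in Algorithm~\ref{alg:rcpg} uses, as its estimate of $\nabla_\lambda \LO$, the quantity $f^{(2)}(\theta_t,\lambda_t) = g(\xi_t,d) - \beta$ computed from a single trajectory $\xi_t$ sampled afresh at iteration $t$, and that this estimate is a \emph{bounded} function of $\xi_t$.

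First I would fix the natural filtration $\mathcal{F}_t = \sigma\big(\theta_n,\lambda_n,\Delta^{(1)}_n,\Delta^{(2)}_n : n \le t\big)$, with respect to which $\theta_t$ and $\lambda_t$ are measurable. At iteration $t$ the worst-case transition kernel $\hat p^{\policyparam}$ (the $\arg\min$ over $\ambset_{s,a}$ of $p\tr \hat v^{\policyparam}$, state-action by state-action) is, by $s,a$-rectangularity, a deterministic and hence $\mathcal{F}_t$-measurable function of $\theta_t$; the trajectory $\xi_t$ is then generated from the induced law $p^{\policyparam}(\cdot)$, conditionally independent of $\mathcal{F}_t$ given $\theta_t$. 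By Theorem~\ref{th:lagrange_gradient_rule}, $\E\big[f^{(2)}(\theta_t,\lambda_t)\mid\mathcal{F}_t\big] = \sum_\xi \hat p^{\policyparam}(\xi)\,g(\xi,d) - \beta = g(\theta_t,\lambda_t)$, so $\E\big[\Delta^{(2)}_{t+1}\mid\mathcal{F}_t\big] = 0$, which is the martingale-difference identity.

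For square-integrability I would invoke boundedness of the per-step constraint cost: since $\states$ and $\actions$ are finite, $|d_{s,a,s'}|\le D_{\max}$ for some constant $D_{\max}$, whence $|g(\xi,d)| = \big|\sum_{t\ge 0}\gamma^t d_{s_t,a_t,s_{t+1}}\big| \le D_{\max}/(1-\gamma)$ (or $D_{\max}T$ in the finite-horizon version). Therefore $|f^{(2)}(\theta_t,\lambda_t)| \le D_{\max}/(1-\gamma) + \beta =: C$ uniformly, and $|g(\theta_t,\lambda_t)| \le C$ as well by Jensen's inequality, so $\|\Delta^{(2)}_{t+1}\| \le 2C$ almost surely. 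In particular $\E\big[\|\Delta^{(2)}_{t+1}\|^2 \mid \mathcal{F}_t\big] \le 4C^2 \le K\big(1 + \|(\theta_t,\lambda_t)\|^2\big)$ with $K = 4C^2$, which is the bound required by \textbf{(A4)}; indeed, since $\nabla_\lambda \LO$ does not depend on $\lambda$, the term $\Delta^{(2)}_{t+1}$ does not depend on $\lambda_t$ at all, so this bound is genuinely uniform.

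I do not expect a real obstacle here: the statement is routine. The only points needing care are (i) setting up the filtration and the ``fresh sample'' step so that the conditional mean of the single-trajectory estimate is exactly the true gradient $g(\theta_t,\lambda_t)$ — which is precisely where $s,a$-rectangularity and the measurability of $\hat p^{\policyparam}$ in $\theta_t$ are used — and (ii) explicitly noting the (mild) boundedness of the constraint cost $d$, which makes the variance bound immediate. The argument parallels that for $\Delta^{(1)}_{t+1}$ in the previous proposition; the difference is only that the $\theta$-estimator additionally carries the score-function factor $\sum_{t} \nabla_\theta\log\pi_\theta(a_t|s_t)$, bounded under \textbf{(A1)}, whereas the $\lambda$-estimator is already bounded outright.
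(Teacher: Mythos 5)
Your proof is correct, and in fact it supplies more than the paper does: the paper states this proposition without any proof at all (as it does for the companion proposition on $\Delta^{(1)}_{t+1}$), implicitly treating it as routine. Your two steps are exactly the right ones — the martingale-difference identity $\E[\Delta^{(2)}_{t+1}\mid\mathcal{F}_t]=0$ holds essentially by construction once you note that $\xi_t$ is sampled afresh from a law that is a deterministic ($s,a$-rectangular, $\mathcal{F}_t$-measurable) function of $\theta_t$, and the conditional second-moment bound follows from $|g(\xi,d)|\le D_{\max}/(1-\gamma)$ (or $D_{\max}T$) on a finite state-action space, giving a uniform constant $K$ that trivially satisfies the $K(1+\|x_t\|^2)$ form required by \textbf{(A4)}. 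The only points worth flagging are cosmetic: a tie-breaking rule should be fixed for the $\arg\min$ defining $\hat p^{\policyparam}$ so that it is genuinely a measurable function of $\theta_t$, and the paper's own displayed definition $\Delta^{(2)}_{t+1}=f^{(2)}(\theta_t,\lambda_t)-h(\theta_t,\lambda_t)$ contains a typo ($h$ should be $g$), which you have silently and correctly repaired.
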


We now focus on the singularly perturbed ODE obtained from (\ref{eq:rcmdp_gen_it_theta}) and (\ref{eq:rcmdp_gen_it_lambda}).

\begin{equation} \label{eq:rcmdp_theta_ode}
\dot{\theta} = \zeta_2(t) h(\theta_t,\lambda_t)
\end{equation}

\begin{equation} \label{eq:rcmdp_lambda_ode}
\dot{\lambda} = \zeta_1(t) g(\theta_t, \lambda_t)
\end{equation}

With assumption (A2), $\lambda(\cdot)$ is quasi-static from the perspective of $\theta(\cdot)$ turning (\ref{eq:rcmdp_theta_ode}) into an ODE. where $\lambda$ is held fixed:

\begin{equation} \label{eq:rcmdp_theta_ode_fixed}
\dot{\theta} = \zeta_2(t) h(\theta_t,\lambda)
\end{equation}

We additionally assume that:

\paragraph{(A5)} (\ref{eq:rcmdp_theta_ode_fixed}) has a globally asymptotically stable equilibrium $x(\lambda)$ such that $x$ is a Lipschitz map.

Assumption (A5) turns (\ref{eq:rcmdp_lambda_ode}) into:

\begin{equation} \label{eq:rcmdp_lambda_ode_fixed}
\dot{\lambda}(t) = g(x(\lambda_t),\lambda_t)
\end{equation}

Let's further assume that:

\paragraph{(A6)} The ODE (\ref{eq:rcmdp_lambda_ode_fixed}) has a globally asymptotically stable equilibrium $\lambda\opt$.

\paragraph{(A7)} $\sup_t ( \lVert \theta_t \rVert + \lVert \lambda_t \rVert ) < \infty$ almost surely.

\paragraph{Proof of Theorem \ref{th:rcmdp_pg_conv}}
\begin{proof}
	Above are the necessary conditions to apply Theorem 2 from chapter 6 of \cite{Borkar2009}, which shows that $(\theta_t,\lambda_t) \rightarrow (x(\lambda\opt),\lambda\opt)$. Now the saddle point theorem assures that $\theta\opt = x(\lambda\opt)$ maximizes the Lagrange optimization problem stated in (\ref{eq:rcmdp_lo_customized_obj}).
\end{proof}

\section{Reward Shaping in RCMDPs} \label{sec:apx_reward_shaping}
\subsection{Proof of \cref{th:reward_shaping_policy}} \label{subsec:apx_reward_shaping_policy_invariance_proof}
    \begin{proof}
    The robust optimal $q$-function satisfy the robust Bellman equation for the original RCMDP $\rcmdp$:
    \begin{equation*}
    \begin{aligned}
        \hat{q}^\star_\rcmdp(s,a) &= \min_{p\in\ambset_{s,a}} \E_{s'\sim p}\Big[ r'_{s,a,s'} + \gamma \max_{a'\in\actions}\hat{q}^\star_\rcmdp(s',a') \Big]
    \end{aligned}  
    \end{equation*}
    Subtracting $\lyapunov(s)$ and some algebraic manipulation gives:
    \begin{equation*}
    \begin{aligned}
        \hat{q}^\star_\rcmdp(s,a) + \lyapunov(s) &= \min_{p\in\ambset_{s,a}} \E_{s'\sim p}\Big[ r'_{s,a,s'} - \gamma \lyapunov(s') + \lyapunov(s) + \gamma \max_{a'\in\actions}\big(\hat{q}^\star_\rcmdp(s',a') + \lyapunov(s') \big) \Big]\\
        &\stackrel{(a)}{=} \min_{p\in\ambset_{s,a}} \E_{s'\sim p}\Big[ r'_{s,a,s'} - \big(\lyapunov(s') - \lyapunov(s)\big) + \max_{a'\in\actions}\big(\hat{q}^\star_\rcmdp(s',a') + \lyapunov(s') \big) \Big]\\
    \end{aligned}  
    \end{equation*}
    Here $(a)$ follows by setting $\gamma=1$, and considering the finite-horizon setting, i.e., $g^\pi(\xi, r) =  \sum_{t=0}^{N} \gamma^t r_{s_t,a_t,s_{t+1}}$. 
    
    We now define $\hat{q}_{\rcmdp'}(s,a) \stackrel{\delta}{=} \hat{q}^\star_\rcmdp(s,a) + \lyapunov(s)$ and set $f_{s,a,s'}= -\big(\lyapunov(s') - \lyapunov(s)\big)$. We therefore have:
    \begin{equation*}
    \begin{aligned}
        \hat{q}_{\rcmdp'}(s,a) &=
        \min_{p\in\ambset_{s,a}} \E_{s'\sim p}\Big[ \big(r'_{s,a,s'} + f_{s,a,s'} \big) + \max_{a'\in\actions}\hat{q}_{\rcmdp'}(s',a') \Big]\\
    \end{aligned}  
    \end{equation*}
    But this is exactly the Bellman equation for reward transformed RCMDP $\rcmdp'$. We then have: \[\hat{q}^\star_{\rcmdp'}(s,a) = \hat{q}_{\rcmdp'}(s,a) = \hat{q}^\star_{\rcmdp}(s,a)+\lyapunov(s)\]
    And the optimal policy for $\rcmdp'$ satisfies:
    \begin{equation*}
        \begin{aligned}
            \pi^\star_{\rcmdp'}(s) &\in \argmax_{a\in\actions}\hat{q}^\star_{\rcmdp'}(s,a)\\
            &= \argmax_{a\in\actions}\hat{q}^\star_{\rcmdp}(s,a) + \lyapunov(s)\\
            &= \argmax_{a\in\actions}\hat{q}^\star_{\rcmdp}(s,a)
        \end{aligned}
    \end{equation*}
    And is optimal for the original RCMDP $\rcmdp$ as well. Similarly, it can be shown that every optimal policy of original RCMDP $\rcmdp$ is also optimal for the transformed RCMDP $\rcmdp'$ simply by following exactly same steps as shown above, but with shaping function $-f_{s,a,s'}$ and the role of $\rcmdp$ and $\rcmdp'$ interchanged.
    
    Next, consider the case of infinite-horizon,i.e., $\lambda<1$. To extend the convergence result obtained for the case of finite-horizon with $\lambda=1$ to this case, we rely on the results in \cite{Altman2004}. Indeed, under the reasonable\footnote{Transient MDPs assume that the expected time we spend (under policy $\pi$) in any state $s$ is finite.} assumption of transient MDPs (Def. 7.1, p. 75, \cite{Altman2004}), we can conclude, in our specific case of finite-state and finite-action MDPs, that our MDPs are contracting (using the argument in \cite{Altman2004}, p. 99). Next, using Theorem 7.5, \cite{Altman2004}, we conclude that our MDPs admit a uniform Lyapunov function (in the sense of Def. 7.4, p. 77, \cite{Altman2004}). Finally, under the Slater feasibility condition, i.e., inequality (1b) satisfied, and using Theorem 15.5, p. 201, \cite{Altman2004}, we conclude that the value of the infinite-horizon problem converges to the value of the
finite-horizon one.
    \end{proof}

\end{document}